\documentclass[journal]{IEEEtran} %
\usepackage{hyperref}
\hypersetup{colorlinks,allcolors=black}
\usepackage{subfig}
\usepackage[utf8]{inputenc} 
\usepackage{graphicx,url}
\usepackage{booktabs}       
\graphicspath{ {./images/} }
\usepackage{multirow}
\usepackage{hhline}
\usepackage{graphicx}
\usepackage{amssymb,amsmath,mathtools}
\usepackage{amsthm}
\newtheorem{theorem}{Theorem}
\newtheorem{lemma}{Lemma}
\newtheorem{corollary}{Corollary}
\usepackage{bm}                 %

\usepackage{color,soul}
\usepackage[dvipsnames]{xcolor}
\usepackage{bbm}
\usepackage{lipsum,lineno}
\usepackage{float}
\makeatletter
\newif\if@restonecol
\makeatother
\usepackage{amsmath}
\usepackage{kbordermatrix}
\usepackage{mathrsfs}
\usepackage[shortlabels]{enumitem}
\usepackage{algorithm}
\usepackage{algpseudocode}
\usepackage{fancyhdr} 
\usepackage{tabularx}
\usepackage{dsfont}
\usepackage{comment} 
\pagestyle{fancy} 
\lhead{}\chead{}\rhead{}
\lfoot{}\cfoot{\thepage}\rfoot{}
\makeatletter
\renewenvironment{proof}[1][\proofname]{\par
  \pushQED{\qed}%
  \normalfont \topsep6\p@\@plus6\p@\relax
  \trivlist
  \item[\hskip\labelsep
        \itshape
    #1]\ignorespaces
}{%
  \popQED\endtrivlist\@endpefalse
}
\makeatother
\usepackage{tikz,pgfplots}
\usepackage{pgfplots}
\usetikzlibrary{shapes.geometric,backgrounds,patterns, trees}
\usetikzlibrary{3d,decorations.text,shapes.arrows,positioning,fit,backgrounds}
\usetikzlibrary{positioning, decorations.pathmorphing, shapes}
\usetikzlibrary{decorations.pathreplacing}
\usetikzlibrary{shapes.geometric,backgrounds,patterns, trees}
\usetikzlibrary{arrows.meta,
                bending,
                intersections,
                quotes,
                shapes.geometric}
              \usetikzlibrary{automata, positioning}
\usepgfplotslibrary{fillbetween}
\usetikzlibrary{shapes,arrows}
\usetikzlibrary{arrows.meta}
\usetikzlibrary{positioning}
\tikzset{set/.style={draw,circle,inner sep=0pt,align=center}}
\usetikzlibrary{automata, positioning}
  \usetikzlibrary{shapes,shadows}
  \tikzstyle{abstractbox} = [draw=black, fill=white, rectangle,
  inner sep=10pt, style=rounded corners, drop shadow={fill=black,
  opacity=1}]
\tikzstyle{abstracttitle} =[fill=white]
\usetikzlibrary{calc,positioning,shapes.geometric}
\usetikzlibrary{arrows.meta,arrows}
\DeclareMathOperator*{\argmax}{arg\,max}

\usetikzlibrary{matrix}
\tikzstyle{cblue}=[circle, draw, thin,fill=cyan!20, scale=0.8]
\tikzstyle{qgre}=[rectangle, draw, thin,fill=green!20, scale=0.8]
\tikzstyle{rpath}=[ultra thick, red, opacity=0.4]
\tikzstyle{legend_isps}=[rectangle, rounded corners, thin,
                       fill=gray!20, text=blue, draw]

\tikzstyle{legend_overlay}=[rectangle, rounded corners, thin,
                           top color= white,bottom color=green!25,
                           minimum width=2.5cm, minimum height=0.8cm,
                           pinegreen]
\tikzstyle{legend_phytop}=[rectangle, rounded corners, thin,
                          top color= white,bottom color=cyan!25,
                          minimum width=2.5cm, minimum height=0.8cm,
                          royalblue]
\tikzstyle{legend_general}=[rectangle, rounded corners, thin,
                          top color= white,bottom color=lavander!25,
                          minimum width=2.5cm, minimum height=0.8cm,
                          violet]
                          \usetikzlibrary{matrix}

\colorlet{myRed}{red!20}
\tikzset{
  rows/.style 2 args={/utils/temp/.style={row ##1/.append style={nodes={#2}}},
    /utils/temp/.list={#1}},
  columns/.style 2 args={/utils/temp/.style={column ##1/.append style={nodes={#2}}},
    /utils/temp/.list={#1}}}
\usetikzlibrary{backgrounds,calc,shadings,shapes.arrows,shapes.symbols,shadows}
\definecolor{switch}{HTML}{006996}

\makeatletter
\pgfkeys{/pgf/.cd,
  parallelepiped offset x/.initial=2mm,
  parallelepiped offset y/.initial=2mm
}
\pgfdeclareshape{parallelepiped}
{
  \inheritsavedanchors[from=rectangle] 
  \inheritanchorborder[from=rectangle]
  \inheritanchor[from=rectangle]{north}
  \inheritanchor[from=rectangle]{north west}
  \inheritanchor[from=rectangle]{north east}
  \inheritanchor[from=rectangle]{center}
  \inheritanchor[from=rectangle]{west}
  \inheritanchor[from=rectangle]{east}
  \inheritanchor[from=rectangle]{mid}
  \inheritanchor[from=rectangle]{mid west}
  \inheritanchor[from=rectangle]{mid east}
  \inheritanchor[from=rectangle]{base}
  \inheritanchor[from=rectangle]{base west}
  \inheritanchor[from=rectangle]{base east}
  \inheritanchor[from=rectangle]{south}
  \inheritanchor[from=rectangle]{south west}
  \inheritanchor[from=rectangle]{south east}
  \backgroundpath{
    \southwest \pgf@xa=\pgf@x \pgf@ya=\pgf@y
    \northeast \pgf@xb=\pgf@x \pgf@yb=\pgf@y
    \pgfmathsetlength\pgfutil@tempdima{\pgfkeysvalueof{/pgf/parallelepiped
      offset x}}
    \pgfmathsetlength\pgfutil@tempdimb{\pgfkeysvalueof{/pgf/parallelepiped
      offset y}}
    \def\ppd@offset{\pgfpoint{\pgfutil@tempdima}{\pgfutil@tempdimb}}
    \pgfpathmoveto{\pgfqpoint{\pgf@xa}{\pgf@ya}}
    \pgfpathlineto{\pgfqpoint{\pgf@xb}{\pgf@ya}}
    \pgfpathlineto{\pgfqpoint{\pgf@xb}{\pgf@yb}}
    \pgfpathlineto{\pgfqpoint{\pgf@xa}{\pgf@yb}}
    \pgfpathclose
    \pgfpathmoveto{\pgfqpoint{\pgf@xb}{\pgf@ya}}
    \pgfpathlineto{\pgfpointadd{\pgfpoint{\pgf@xb}{\pgf@ya}}{\ppd@offset}}
    \pgfpathlineto{\pgfpointadd{\pgfpoint{\pgf@xb}{\pgf@yb}}{\ppd@offset}}
    \pgfpathlineto{\pgfpointadd{\pgfpoint{\pgf@xa}{\pgf@yb}}{\ppd@offset}}
    \pgfpathlineto{\pgfqpoint{\pgf@xa}{\pgf@yb}}
    \pgfpathmoveto{\pgfqpoint{\pgf@xb}{\pgf@yb}}
    \pgfpathlineto{\pgfpointadd{\pgfpoint{\pgf@xb}{\pgf@yb}}{\ppd@offset}}
  }
}

\makeatletter
\tikzset{anchor/.append code=\let\tikz@auto@anchor\relax,
  add font/.code=%
    \expandafter\def\expandafter\tikz@textfont\expandafter{\tikz@textfont#1},
  left delimiter/.style 2 args={append after command={\tikz@delimiter{south east}
    {south west}{every delimiter,every left delimiter,#2}{south}{north}{#1}{.}{\pgf@y}}}}
\tikzstyle{sms} = [rectangle callout, draw,very thick, rounded corners, minimum height=20pt]
\makeatletter
\tikzset{anchor/.append code=\let\tikz@auto@anchor\relax,
  add font/.code=%
    \expandafter\def\expandafter\tikz@textfont\expandafter{\tikz@textfont#1},
  left delimiter/.style 2 args={append after command={\tikz@delimiter{south east}
    {south west}{every delimiter,every left delimiter,#2}{south}{north}{#1}{.}{\pgf@y}}}}
\tikzstyle{sms} = [rectangle callout, draw,very thick, rounded corners, minimum height=20pt]
\usetikzlibrary{positioning,calc}

\tikzset{l3 switch/.style={
    parallelepiped,fill=switch, draw=white,
    minimum width=0.75cm,
    minimum height=0.75cm,
    parallelepiped offset x=1.75mm,
    parallelepiped offset y=1.25mm,
    path picture={
      \node[fill=white,
        circle,
        minimum size=6pt,
        inner sep=0pt,
        append after command={
          \pgfextra{
            \foreach \angle in {0,45,...,360}
            \draw[-latex,fill=white] (\tikzlastnode.\angle)--++(\angle:2.25mm);
          }
        }
      ]
       at ([xshift=-0.75mm,yshift=-0.5mm]path picture bounding box.center){};
    }
  },
  ports/.style={
    line width=0.3pt,
    top color=gray!20,
    bottom color=gray!80
  },
  rack switch/.style={
    parallelepiped,fill=white, draw,
    minimum width=1.25cm,
    minimum height=0.25cm,
    parallelepiped offset x=2mm,
    parallelepiped offset y=1.25mm,
    xscale=-1,
    path picture={
      \draw[top color=gray!5,bottom color=gray!40]
      (path picture bounding box.south west) rectangle
      (path picture bounding box.north east);
      \coordinate (A-west) at ([xshift=-0.2cm]path picture bounding box.west);
      \coordinate (A-center) at ($(path picture bounding box.center)!0!(path
        picture bounding box.south)$);
      \foreach \x in {0.275,0.525,0.775}{
        \draw[ports]([yshift=-0.05cm]$(A-west)!\x!(A-center)$)
          rectangle +(0.1,0.05);
        \draw[ports]([yshift=-0.125cm]$(A-west)!\x!(A-center)$)
          rectangle +(0.1,0.05);
       }
      \coordinate (A-east) at (path picture bounding box.east);
      \foreach \x in {0.085,0.21,0.335,0.455,0.635,0.755,0.875,1}{
        \draw[ports]([yshift=-0.1125cm]$(A-east)!\x!(A-center)$)
          rectangle +(0.05,0.1);
      }
    }
  },
  server/.style={
    parallelepiped,
    fill=white, draw,
    minimum width=0.35cm,
    minimum height=0.75cm,
    parallelepiped offset x=3mm,
    parallelepiped offset y=2mm,
    xscale=-1,
    path picture={
      \draw[top color=gray!5,bottom color=gray!40]
      (path picture bounding box.south west) rectangle
      (path picture bounding box.north east);
      \coordinate (A-center) at ($(path picture bounding box.center)!0!(path
        picture bounding box.south)$);
      \coordinate (A-west) at ([xshift=-0.575cm]path picture bounding box.west);
      \draw[ports]([yshift=0.1cm]$(A-west)!0!(A-center)$)
        rectangle +(0.2,0.065);
      \draw[ports]([yshift=0.01cm]$(A-west)!0.085!(A-center)$)
        rectangle +(0.15,0.05);
      \fill[black]([yshift=-0.35cm]$(A-west)!-0.1!(A-center)$)
        rectangle +(0.235,0.0175);
      \fill[black]([yshift=-0.385cm]$(A-west)!-0.1!(A-center)$)
        rectangle +(0.235,0.0175);
      \fill[black]([yshift=-0.42cm]$(A-west)!-0.1!(A-center)$)
        rectangle +(0.235,0.0175);
    }
  },
}

\usetikzlibrary{calc, shadings, shadows, shapes.arrows}

\tikzset{%
  interface/.style={draw, rectangle, rounded corners, font=\LARGE\sffamily},
  ethernet/.style={interface, fill=yellow!50},
  serial/.style={interface, fill=green!70},
  speed/.style={sloped, anchor=south, font=\large\sffamily},
  route/.style={draw, shape=single arrow, single arrow head extend=4mm,
    minimum height=1.7cm, minimum width=3mm, white, fill=switch!20,
    drop shadow={opacity=.8, fill=switch}, font=\tiny}
}
\newcommand*{\shift}{1.3cm}
\newcommand{\Crossk}{$\mathbin{\tikz [x=1.2ex,y=1.2ex,line width=.1ex, black] \draw (0,0) -- (1,1) (0,1) -- (1,0);}$}%
\newcommand*{\router}[1]{
\begin{tikzpicture}
  \coordinate (ll) at (-3,0.5);
  \coordinate (lr) at (3,0.5);
  \coordinate (ul) at (-3,2);
  \coordinate (ur) at (3,2);
  \shade [shading angle=90, left color=switch, right color=white] (ll)
    arc (-180:-60:3cm and .75cm) -- +(0,1.5) arc (-60:-180:3cm and .75cm)
    -- cycle;
  \shade [shading angle=270, right color=switch, left color=white!50] (lr)
    arc (0:-60:3cm and .75cm) -- +(0,1.5) arc (-60:0:3cm and .75cm) -- cycle;
  \draw [thick] (ll) arc (-180:0:3cm and .75cm)
    -- (ur) arc (0:-180:3cm and .75cm) -- cycle;
  \draw [thick, shade, upper left=switch, lower left=switch,
    upper right=switch, lower right=white] (ul)
    arc (-180:180:3cm and .75cm);
  \node at (0,0.5){\color{blue!60!black}\Huge #1};
  \begin{scope}[yshift=2cm, yscale=0.28, transform shape]
    \node[route, rotate=45, xshift=\shift] {\strut};
    \node[route, rotate=-45, xshift=-\shift] {\strut};
    \node[route, rotate=-135, xshift=\shift] {\strut};
    \node[route, rotate=135, xshift=-\shift] {\strut};
  \end{scope}
\end{tikzpicture}}

\makeatletter
\pgfdeclareradialshading[tikz@ball]{cloud}{\pgfpoint{-0.275cm}{0.4cm}}{%
  color(0cm)=(tikz@ball!75!white);
  color(0.1cm)=(tikz@ball!85!white);
  color(0.2cm)=(tikz@ball!95!white);
  color(0.7cm)=(tikz@ball!89!black);
  color(1cm)=(tikz@ball!75!black)
}
\tikzoption{cloud color}{\pgfutil@colorlet{tikz@ball}{#1}%
  \def\tikz@shading{cloud}\tikz@addmode{\tikz@mode@shadetrue}}
\makeatother

\tikzset{my cloud/.style={
     cloud, draw, aspect=2,
     cloud color={gray!5!white}
  }
}

\hyphenation{de-di-ca-ted}

\usepackage{etoolbox}
\makeatletter
\patchcmd{\@makecaption}
  {\scshape}
  {}
  {}
  {}
\makeatother

\begin{document}
\bstctlcite{MyBSTcontrol}
\title{Intrusion Prevention through Optimal Stopping}

\author{\IEEEauthorblockN{Kim Hammar \IEEEauthorrefmark{2}\IEEEauthorrefmark{3} and Rolf Stadler\IEEEauthorrefmark{2}\IEEEauthorrefmark{3}}

 \IEEEauthorblockA{\IEEEauthorrefmark{2}
Division of Network and Systems Engineering, KTH Royal Institute of Technology, Sweden
 }\\
 \IEEEauthorblockA{\IEEEauthorrefmark{3} KTH Center for Cyber Defense and Information Security, Sweden \\
Email: \{kimham, stadler\}@kth.se%
\\
\today
}
}

\markboth{\copyright 2022 IEEE; This work has been submitted to the IEEE for possible publication. Copyright may be transferred without notice.}%
{}
\maketitle
\begin{abstract}
We study automated intrusion prevention using reinforcement learning. Following a novel approach, we formulate the problem of intrusion prevention as an (optimal) multiple stopping problem. This formulation gives us insight into the structure of optimal policies, which we show to have threshold properties. For most practical cases, it is not feasible to obtain an optimal defender policy using dynamic programming. We therefore develop a reinforcement learning approach to approximate an optimal threshold policy.
We introduce \textsc{T-SPSA}, an efficient reinforcement learning algorithm that learns threshold policies through stochastic approximation. We show that \textsc{T-SPSA} outperforms state-of-the-art algorithms for our use case. Our overall method for learning and validating policies includes two systems: a simulation system where defender policies are incrementally learned and an emulation system where statistics are produced that drive simulation runs and where learned policies are evaluated. We show that this approach can produce effective defender policies for a practical IT infrastructure.
\end{abstract}

\begin{IEEEkeywords}
Network security, automation, optimal stopping, reinforcement learning, Markov Decision Process, MDP, POMDP.
\end{IEEEkeywords}

\IEEEpeerreviewmaketitle

\section{Introduction}
An organization's security strategy has traditionally been defined, implemented, and updated by domain experts \cite{int_prevention}. Although this approach can provide basic security for an organization's communication and computing infrastructure, a growing concern is that infrastructure update cycles become shorter and attacks increase in sophistication \cite{cyber_threat_landscape,intelligent_attacks_foi}. Consequently, the security requirements become increasingly difficult to meet. To address this challenge, significant efforts have started to automate security frameworks and the process of obtaining effective security policies. Examples of this research include: automated creation of threat models \cite{mal_pontus}; computation of defender policies using dynamic programming and control theory \cite{dp_security_1,Miehling_control_theoretic_approaches_summary}; computation of exploits and corresponding defenses through evolutionary methods \cite{armsrace_malware}; identification of infrastructure vulnerabilities through attack simulations and threat intelligence \cite{wagner_automated_segmentation, threat_intel_misp}; computation of defender policies through game-theoretic methods \cite{nework_security_alpcan, serkan_gyorgy_game}; and use of machine learning techniques to estimate model parameters and policies \cite{hammar_stadler_cnsm_20, hammar_stadler_cnsm_21}.

In this paper, we present a novel approach to automatically learn defender policies. We apply this approach to an \textit{intrusion prevention} use case. Here, we use the term "intrusion prevention'' as suggested in the literature, e.g. in \cite{int_prevention}. It means that a defender prevents an attacker from reaching its goal, rather than preventing it from accessing any part of the infrastructure.

Our use case involves the IT infrastructure of an organization (see Fig. \ref{fig:system2}). The operator of this infrastructure, which we call the defender, takes measures to protect it against a possible attacker while, at the same time, providing a service to a client population. The infrastructure includes a public gateway through which the clients access the service and which also is open to a possible attacker. The attacker decides when to start an intrusion and then executes a sequence of actions that includes reconnaissance and exploits. Conversely, the defender aims at preventing intrusions and maintaining service to its clients. It monitors the infrastructure and can defend it by taking defensive actions, which can prevent a possible attacker but also incur costs. What makes the task of the defender difficult is the fact that it lacks direct knowledge of the attacker's actions and must infer that an intrusion occurs from monitoring data.
\begin{figure}
  \centering
  \scalebox{0.93}{
    \input{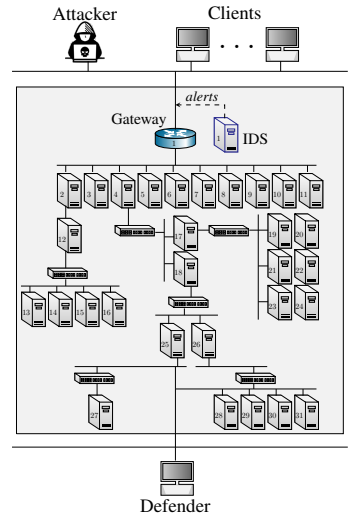}
    }
    \caption{The IT infrastructure and the actors in the use case.}
    \label{fig:system2}
\end{figure}

We study the use case within the framework of discrete-time dynamical systems. Specifically, we formulate the problem of finding an optimal defender policy as an \textit{(optimal) multiple stopping problem}. In this formulation, the defender can take a finite number of \textit{stops}. Each stop is associated with a defensive action and the objective is to decide the optimal times when to stop. This approach gives us insight into the structure of optimal defender policies through the theory of dynamic programming and optimal stopping \cite{bellman1957markovian,wald}. In particular, we show that an optimal \textit{multi-threshold policy} exists that can be efficiently computed and implemented.

The structure of optimal policies in dynamical systems is a well studied area \cite{puterman,krishnamurthy_2016}. However, it has not been considered in prior research on automated intrusion prevention \cite{hammar_stadler_cnsm_20,elderman, schwartz_2020, oslo_pentest_rl, microsoft_red_teaming, ridley_ml_defense, muzero_sdn, deep_hierarchical_rl_pentest, pentest_rl_rohit, adaptive_cyber_defense_pomdp_rl,atmos,sdn_rl_ddos,deep_air}. Further, although the optimal stopping problem frequently is used to formulate problems in the fields of finance and communication systems \cite{optimal_multiple_stopping_finance_1,optimal_multiple_stopping_finance_2,kleinberg_multi_secretary,optimal_multiple_stopping_social_media_1,Nakai1985,optimal_stopping_finance,roy_threshold,tartakovsky_1,kurt_rl,hammar_stadler_cnsm_21}, to the best of our knowledge, formulating intrusion prevention as a multiple stopping problem is a novel approach.

Since the defender can access only a set of infrastructure metrics and does not directly observe the attacker, we use a Partially Observed Markov Decision Process (POMDP) to model the multiple stopping problem. An optimal policy for a POMDP can be obtained through two main methods: dynamic programming and reinforcement learning. In our case, dynamic programming is not feasible due to the size of the POMDP \cite{pspace_complexity}. Therefore, we use a reinforcement learning approach to obtain the defender policy. We simulate a long series of POMDP episodes whereby the defender continuously updates its policy based on outcomes of previous episodes. To update the policy, we introduce \textsc{T-SPSA}, a reinforcement learning algorithm that exploits the threshold structure of optimal policies. We show that \textsc{T-SPSA} efficiently learns a near-optimal policy despite the high complexity of computing optimal policies for general POMDPs \cite{pspace_complexity}.

Our method for learning and validating policies includes building two systems (see Fig. \ref{fig:method}). First, we develop an \textit{emulation system} where key functional components of the target infrastructure are replicated. In this system, we run attack scenarios and defender responses. These runs produce system metrics and logs that we use to estimate empirical distributions of infrastructure metrics, which are needed to simulate POMDP episodes. Second, we develop a \textit{simulation system} where POMDP episodes are executed and policies are incrementally learned. Finally, the policies are extracted and evaluated in the emulation system and possibly implemented in the target infrastructure (see Fig. \ref{fig:method}). In short, the emulation system is used to provide the statistics needed to simulate the POMDP and to evaluate policies, whereas the simulation system is used to learn policies.

We make three contributions with this paper. First, we formulate intrusion prevention as a problem of multiple stopping. This novel formulation allows us a) to derive properties of an optimal defender policy using results from dynamic programming and optimal stopping; and b) to approximate an optimal policy for a non-trivial infrastructure configuration. Second, we present a reinforcement learning approach to obtain policies in an emulated infrastructure. With this approach, we narrow the gap between the evaluation environment and a scenario playing out in a real system. We also address a limitation of many related works, which rely on simulations solely to evaluate policies \cite{hammar_stadler_cnsm_20,armsrace_malware,elderman,oslo_pentest_rl,microsoft_red_teaming,rl_cyberdefense_heartbleed,schwartz_2020}. Third, we present \textsc{T-SPSA}, an efficient reinforcement learning algorithm that exploits the threshold structure of optimal policies and outperforms state-of-the-art algorithms for our use case.

We conclude this section with remarks about the context of this research and the practical relevance of the results in this paper. The objective of our line of research is to construct a mathematical and conceptual framework, validated by an experimental environment, that produces defender policies for realistic scenarios through self-learning. We are engaged in a program with high potential reward that will need many years of investigation. This paper provides an important result and milestone in this program.

From a practical point of view, the main question the paper answers is this: at which points in time should a defender take defensive actions given periodic but limited observational data? The paper proposes a fundamental framework to study this question. We show theoretically and experimentally that the optimal action times can be obtained through thresholds that the framework predicts and which can be efficiently implemented in a real system.
\section{The Intrusion Prevention Use Case}\label{sec:use_case}
\begin{figure}
  \centering
  \scalebox{0.95}{
    \input{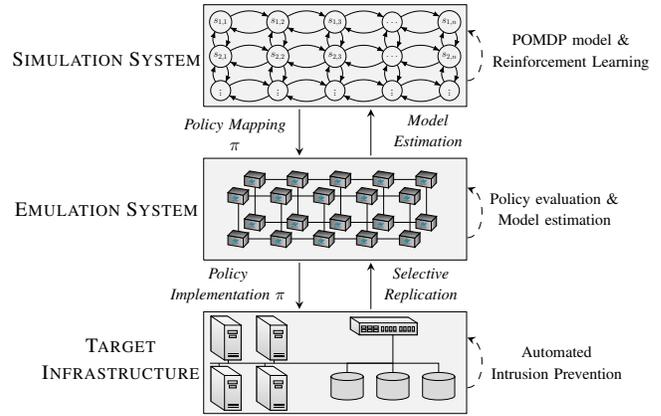}
 }
    \caption{Our approach for finding and evaluating intrusion prevention policies.}
    \label{fig:method}
\end{figure}
We consider an intrusion prevention use case that involves the IT infrastructure of an organization. The operator of this infrastructure, which we call the defender, takes measures to protect it against an attacker while, at the same time, providing a service to a client population (Fig. \ref{fig:system2}). The infrastructure includes a set of servers that run the service and an intrusion detection system (IDS) that logs events in real-time. Clients access the service through a public gateway, which also is open to the attacker.

We assume that the attacker intrudes into the infrastructure through the gateway, performs reconnaissance, and exploits found vulnerabilities, while the defender continuously monitors the infrastructure through accessing and analyzing IDS statistics and login attempts at the servers. The defender can take a fixed number of defensive actions to prevent the attacker. A defensive action is for example to revoke user certificates in the infrastructure, which will recover user accounts compromised by the attacker. It is assumed that the defender takes the defensive actions in a predetermined order. The final action that the defender can take is to block all external access to the gateway. As a consequence of this action, the service as well as any ongoing intrusion are disrupted.

In deciding when to take defensive actions, the defender has two objectives: (\textit{i}) maintain service to its clients; and (\textit{ii}), keep a possible attacker out of the infrastructure. The optimal policy for the defender is to monitor the infrastructure and maintain service until the moment when the attacker enters through the gateway, at which time the attacker must be prevented by taking defensive actions. The challenge for the defender is to identify the precise time when this moment occurs.

In this work, we model the attacker as an agent that starts the intrusion at a random point in time and then takes a predefined sequence of actions, which includes reconnaissance to explore the infrastructure and exploits to compromise servers.

We study the use case from the defender’s perspective. The evolution of the system state and the actions by the defender are modeled with a discrete-time Partially Observed Markov Decision Process (POMDP). The reward function of this process encodes the benefit of maintaining service and the loss of being intruded. Finding an optimal defender policy thus means maximizing the expected reward.
\section{Theoretical Background}\label{sec:theory_background}
This section covers the preliminaries on Markov decision processes, reinforcement learning, and optimal stopping.
\subsection{Markov Decision Processes}\label{sec:mdps}
A Markov Decision Process (MDP) models the control of a discrete-time dynamical system and is defined by a seven-tuple $\mathcal{M} = \langle \mathcal{S}, \mathcal{A}, \mathcal{P}^{a_t}_{s_t,s_{t+1}}, \mathcal{R}^{a_t}_{s_t,s_{t+1}}, \gamma, \rho_1, T \rangle$ \cite{bellman1957markovian,puterman}. $\mathcal{S}$ denotes the set of states and $\mathcal{A}$ denotes the set of actions. $\mathcal{P}^{a_t}_{s_t,s_{t+1}}$ refers to the probability of transitioning from state $s_t$ to state $s_{t+1}$ when taking action $a_t$ (Eq. \ref{eq:mdp_prob_1}), which has the Markov property $\mathbb{P}\left[s_{t+1}|s_t\right] = \mathbb{P}\left[s_{t+1}| s_1, \hdots, s_t\right]$. Similarly, $\mathcal{R}^{a_t}_{s_t,s_{t+1}} \in \mathbb{R}$ is the expected reward when taking action $a_t$ and transitioning from state $s_t$ to state $s_{t+1}$ (Eq. \ref{eq:mdp_reward_fun_1}), which is bounded, i.e. $|\mathcal{R}^{a_t}_{s_t,s_{t+1}}| \leq M < \infty$ for some $M \in \mathbb{R}$. If $\mathcal{P}^{a_t}_{s_t,s_{t+1}}$ and $\mathcal{R}^{a_t}_{s_t,s_{t+1}}$ are independent of the time-step $t$, the MDP is said to be \textit{stationary} and if $\mathcal{S}$ and $\mathcal{A}$ are finite, the MDP is said to be \textit{finite}. Finally, $\gamma \in \left(0,1\right]$ is the discount factor, $\rho_1 : \mathcal{S} \rightarrow [0,1]$ is the initial state distribution, and $T$ is the time horizon.
\begin{align}
&\mathcal{P}^{a_t}_{s_t,s_{t+1}} = \mathbb{P}\left[s_{t+1}| s_t, a_t\right]\label{eq:mdp_prob_1}\\
& \mathcal{R}^{a_t}_{s_t,s_{t+1}} = \mathbb{E}\left[r_{t+1}| a_t,  s_t, s_{t+1}\right] \label{eq:mdp_reward_fun_1}
\end{align}
The system evolves in discrete time-steps from $t=1$ to $t=T$, which constitute one \textit{episode} of the system.

A Partially Observed Markov Decision Process (POMDP) is an extension of an MDP \cite{howard_mdps,krishnamurthy_2016}. In contrast to an MDP, in a POMDP the states are not directly observable. A POMDP is defined by a nine-tuple $\mathcal{M}_{\mathcal{P}} = \langle \mathcal{S}, \mathcal{A}, \mathcal{P}^{a_t}_{s_t,s_{t+1}}, \mathcal{R}^{a_t}_{s_t,s_{t+1}},\gamma, \rho_1, T, \mathcal{O}, \mathcal{Z}\rangle$. The first seven elements define an MDP. $\mathcal{O}$ denotes the set of observations and $\mathcal{Z}(o_{t+1},s_{t+1},a_t) = \mathbb{P}[o_{t+1}|s_{t+1},a_t]$ is the observation function, where $o_{t+1} \in \mathcal{O}$, $s_{t+1} \in \mathcal{S}$, and $a_t \in \mathcal{A}$. If $\mathcal{O}$, $\mathcal{S}$, and $\mathcal{A}$ are finite, the POMDP is said to be finite.

The belief state $b_t \in \mathcal{B}$ is defined as $b_t(s)=\mathbb{P}[s_t=s|h_t]$ for all $s \in \mathcal{S}$. $b_t$ is a sufficient statistic of the state $s_t$ based on the history $h_t$ of the initial state distribution, the actions, and the observations: $h_t=(\rho_1,a_1,o_1,\hdots,a_{t-1},o_t) \in \mathcal{H}$. The belief space $\mathcal{B}=\Delta(\mathcal{S})$ is the unit $(|\mathcal{S}|-1)$-simplex \cite{pomdp_belief_optimal,ASTROM1965174}, where $\Delta(\mathcal{S})$ denotes the set of probability distributions over $\mathcal{S}$. By defining the state at time $t$ to be the belief state $b_t$, a POMDP can be formulated as a continuous-state MDP: $\mathcal{M} = \langle \mathcal{B}, \mathcal{A}, \mathcal{P}^{a_t}_{b_t,b_{t+1}}, \mathcal{R}_{b_t,b_{t+1}}^{a_t}, \gamma, \rho_1, T \rangle$.

The belief state can be computed recursively as follows \cite{krishnamurthy_2016}:
\begin{align}
b_{t+1}(s_{t+1}) &= C\mathcal{Z}(o_{t+1}, s_{t+1}, a_t)\sum_{s_t \in \mathcal{S}}\mathcal{P}^{a_t}_{s_t,s_{t+1}}b_t(s_t)\label{eq:belief_upd}
\end{align}
where $C=1/\sum_{s_{t+1} \in S}\mathcal{Z}(o_{t+1},s_{t+1},a_t) \sum_{s_t \in S}\mathcal{P}_{s_t,s_{t+1}}^{a_t}b_t(s_t)$ is a normalizing factor independent of $s_{t+1}$ to make $b_{t+1}$ sum to $1$.
\subsection{The Reinforcement Learning Problem}\label{sec:rl_prob2}
Reinforcement learning deals with the problem of choosing a sequence of actions for a sequentially observed state variable to maximize a reward function \cite{BertsekasTsitsiklis96,rl_bible}. This problem can be modeled with an MDP if the state space is observable, or with a POMDP if the state space is not fully observable.

In the context of an MDP, a policy is defined as a function $\pi: \{1,\hdots, T\} \times \mathcal{S} \rightarrow \Delta(\mathcal{A})$, where $\Delta(\mathcal{A})$ denotes the set of probability distributions over $\mathcal{A}$. In the case of a POMDP, a policy is defined as a function $\pi: \mathcal{H} \rightarrow \Delta(\mathcal{A})$, or, alternatively, as a function $\pi: \{1,\hdots, T\} \times \mathcal{B} \rightarrow \Delta(\mathcal{A})$. In both cases, a policy is called \textit{stationary} if it is independent of the time-step $t$ given the current state or belief state.

An optimal policy $\pi^{*}$ is a policy that maximizes the expected discounted cumulative reward over the time horizon:
\begin{align}
\pi^{*} &\in \argmax_{\pi \in \Pi} \mathbb{E}_{\pi}\left[\sum_{t=1}^{T}\gamma^{t-1}r_{t}\right] \label{eq:rl_prob}
\end{align}
where $\Pi$ is the policy space, $\gamma$ is the discount factor, $r_t$ is the reward at time $t$, and $\mathbb{E}_{\pi}$ denotes the expectation under $\pi$.

Optimal \textit{deterministic} policies exist for finite MDPs and POMDPs with bounded rewards and either finite horizons or infinite horizons with $\gamma \in (0,1)$ \cite{puterman,krishnamurthy_2016}. If the MDPs or POMDPs also are stationary and the horizons are either random or infinite with $\gamma \in (0,1)$, optimal \textit{stationary} policies exist \cite{puterman,krishnamurthy_2016}.

The Bellman equations relate any optimal policy $\pi^{*}$ to the two value functions $V^{*} : \mathcal{S} \rightarrow \mathbb{R}$ and $Q^{*}: \mathcal{S} \times \mathcal{A} \rightarrow \mathbb{R}$, where $\mathcal{S}$ and $\mathcal{A}$ are state and action spaces of an MDP \cite{bellman_eq}:
\begin{align}
V^{*}(s_t) &= \displaystyle\max_{a_t\in \mathcal{A}} \mathbb{E}\big[r_{t+1} + \gamma V^{*}(s_{t+1}) | s_t, a_t\big]\label{eq:bellman_eq_31} \\
Q^{*}(s_t,a_t) &= \mathbb{E}\big[r_{t+1} + \gamma V^{*}(s_{t+1}) | s_t, a_t\big] \label{eq:bellman_eq_33}\\
\pi^{*}(s_t) &\in \argmax_{a_t\in \mathcal{A}} Q^*(s_t,a_t)\label{eq:bellman_eq_34}
\end{align}
$V^{*}(s_t)$ and $Q^{*}(s_t,a_t)$ denote the expected cumulative discounted reward under $\pi^{*}$ for each state and state-action pair, respectively. Solving Eqs. \ref{eq:bellman_eq_31}-\ref{eq:bellman_eq_33} means computing the value functions from which an optimal policy can be obtained (Eq. \ref{eq:bellman_eq_34}). In the case of a POMDP, the Bellman equations contain $b_t$ instead of $s_t$ and $V^{*}(b_t)$ is convex \cite{smallwood_1}.

Two principal methods are used for finding an optimal policy in a finite MDP or POMDP: dynamic programming and reinforcement learning.

First, the dynamic programming method (e.g. value iteration \cite{bellman_eq,bert05,puterman}) assumes complete knowledge of the seven-tuple MDP or the nine-tuple POMDP and obtains an optimal policy by solving the Bellman equations iteratively (Eq. \ref{eq:bellman_eq_34}), with polynomial time-complexity per iteration for MDPs and PSPACE-complete time-complexity for POMDPs \cite{pspace_complexity}.

Second, the reinforcement learning method computes or approximates an optimal policy without requiring complete knowledge of the transition probabilities or observation probabilities of the MDP or POMDP. Three classes of reinforcement learning algorithms exist: \textit{value-based algorithms}, which approximate solutions to the Bellman equations (e.g. Q-learning \cite{watkins_thesis}); \textit{policy-based algorithms}, which directly search through policy space using gradient-based methods (e.g. Proximal Policy Optimization (PPO) \cite{ppo}); and \textit{model-based algorithms}, which learn the transition or observation probabilities of the MDP or POMDP (e.g. Dyna-Q \cite{rl_bible}). The three algorithm types can also be combined, e.g. through \textit{actor-critic} algorithms, which are mixtures of value-based and policy-based algorithms \cite{rl_bible}. In contrast to dynamic programming algorithms, reinforcement learning algorithms generally have no guarantees to converge to an optimal policy except for the tabular case \cite{jaakola_convergence_Q_NIPS1993, robbins_monro}.
\subsection{The Markovian Optimal Stopping Problem}\label{sec:optimal_stopping}
Optimal stopping is a classical problem domain with a well-developed theory \cite{wald,shirayev,stopping_book_1,chow1971great,bert05,ross_stochastic_dp,bather_decision_theory,puterman,poor_hadjiliadis_2008}. Example use cases for this theory include: asset selling \cite{bert05}, change detection \cite{tartakovsky_1}, machine replacement \cite{krishnamurthy_2016}, hypothesis testing \cite{wald}, gambling \cite{chow1971great}, selling decisions \cite{optimal_stopping_finance}, queue management \cite{roy_threshold}, advertisement scheduling \cite{optimal_multiple_stopping_social_media_1}, industrial control \cite{kalle_stopping_industrial}, and the secretary problem \cite{puterman,kleinberg_multi_secretary}.

Many variants of the optimal stopping problem have been studied. For example, discrete-time and continuous-time problems, finite horizon and infinite horizon problems, problems with fully observed and partially observed state spaces, problems with finite and infinite state spaces, Markovian and non-Markovian problems, and single-stop and multi-stop problems. Consequently, different solution methods for these variants have been developed. The most commonly used methods are the \textit{martingale approach} \cite{stopping_book_1,chow1971great,Snell1952TAMS} and the \textit{Markovian approach} \cite{shirayev,bert05,puterman,ross_stochastic_dp,bather_decision_theory}.

In this paper, we investigate the multiple stopping problem with $L$ stops, a finite time horizon $T$, discrete-time progression, bounded rewards, a finite state space, and the Markov property. We use the Markovian solution approach and model the problem as a POMDP, where the system state evolves as a discrete-time Markov process $(s_{t,l})_{t=1}^{T}$ that is partially observed and depends on the number of stops remaining $l \in \{1,\hdots,L\}$.

At each time-step $t$ of the decision process, two actions are available: ``stop'' ($S$) and ``continue'' ($C$). The \textit{stop} action with $l$ stops remaining yields a reward $\mathcal{R}^{S}_{s_t,s_{t+1},l_t}$ and if only one of the $L$ stops remain, the process terminates. In the case of a \textit{continue} action or a non-final stop action $a_t$, the decision process transitions to the next state according to the transition probabilities $\mathcal{P}^{a_t}_{s_t,s_{t+1},l_t}$ and yields a reward $\mathcal{R}^{a_t}_{s_t,s_{t+1},l_t}$.

The \textit{stopping time} with $l$ stops remaining is a random variable $\tau_l$ that is dependent on $s_1,\hdots,s_{\tau_l}$ and independent of $s_{\tau_{l}+1},\hdots s_{T}$ \cite{stopping_book_1}:
\begin{align}
\tau_l &= \inf\{t: t > \tau_{l+1}, a_t=S\}, & l\in 1,..,L,\text{ }\tau_{L+1}=0 \label{eq:stopping_time_def_1}
\end{align}

The objective is to find a stopping policy $\pi_l^{*}(s_t) \rightarrow \{S,C\}$ that depends on $l$ and maximizes the expected discounted cumulative reward of the stopping times $\tau_{L},\tau_{L-1},\hdots, \tau_1$:
\begin{align}
  &\pi_l^{*} \in \argmax_{\pi_l} \mathbb{E}_{\pi_l}\Bigg[\sum_{t=1}^{\tau_{L}-1}\gamma^{t-1}\mathcal{R}^{C}_{s_t,s_{t+1},L} + \gamma^{\tau_{L}-1}\mathcal{R}^{S}_{s_{\tau_L},s_{\tau_L+1},L}  \nonumber\\
  &+ \hdots + \sum_{t=\tau_2+1}^{\tau_{1}-1}\gamma^{t-1}\mathcal{R}^{C}_{s_t,s_{t+1},1} + \gamma^{\tau_{1}-1}\mathcal{R}^{S}_{s_{\tau_1},s_{\tau_1+1},1} \Bigg]\label{eq:optimal_stopping_2}
\end{align}
Due to the Markov property, any policy that satisfies Eq. \ref{eq:optimal_stopping_2} also satisfies the Bellman equation (Eq. \ref{eq:bellman_eq_34}), which in the partially observed case is:
\begin{align}
&\pi_l^{*}(b) \in \argmax_{\{S,C\}} \label{eq:optimal_stopping_1}\\
& \Bigg[\underbrace{\mathbb{E}_l\left[\mathcal{R}^{S}_{b,b^{o}_{S},l} + \gamma V_{l-1}^{*}(b^{o}_{S})\right]}_{\text{stop } (S)}, \underbrace{\mathbb{E}_l\left[\mathcal{R}^{C}_{b,b_C^o,l} + \gamma V_{l}^{*}(b_C^o)\right]}_{\text{continue } (C)}\Bigg]\text{\tiny }\forall b \in \mathcal{B} \nonumber
\end{align}
where $\pi_l$ is the stopping policy with $l$ stops remaining, $\mathbb{E}_l$ denotes the expectation with $l$ stops remaining, $b$ is the belief state, $V_{l}^{*}$ is the value function with $l$ stops remaining, $b^o_{S}$ and $b^{o}_{C}$ can be computed using Eq. \ref{eq:belief_upd}, and $\mathcal{R}_{b,b^o_{a},l}^{a}$ is the expected reward of action $a \in \{S,C\}$ in belief state $b_t$ when observing $o$ with $l$ stops remaining.

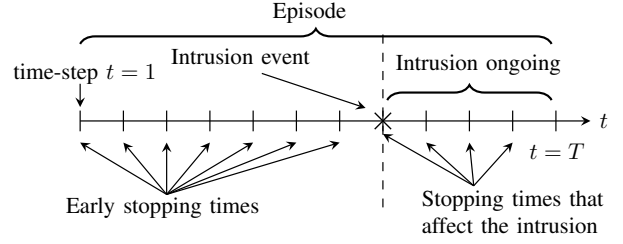
\begin{figure}
  \centering
  \scalebox{1.15}{
    \begin{tikzpicture}[fill=white, >=stealth,
    node distance=3cm,
    database/.style={
      cylinder,
      cylinder uses custom fill,
      shape border rotate=90,
      aspect=0.25,
      draw}]

    \tikzset{
node distance = 9em and 4em,
sloped,
   box/.style = {%
    shape=rectangle,
    rounded corners,
    draw=blue!40,
    fill=blue!15,
    align=center,
    font=\fontsize{12}{12}\selectfont},
 arrow/.style = {%
    line width=0.1mm,
    -{Triangle[length=5mm,width=2mm]},
    shorten >=1mm, shorten <=1mm,
    font=\fontsize{8}{8}\selectfont},
}

\draw[->, color=black] (0, 0) to (5.9, 0);
\draw[-, color=black] (0, -0.12) to (0, 0.12);
\draw[-, color=black] (0.5, -0.12) to (0.5, 0.12);
\draw[-, color=black] (1, -0.12) to (1, 0.12);
\draw[-, color=black] (1.5, -0.12) to (1.5, 0.12);
\draw[-, color=black] (2, -0.12) to (2, 0.12);
\draw[-, color=black] (2.5, -0.12) to (2.5, 0.12);
\draw[-, color=black] (3, -0.12) to (3, 0.12);
\draw[-, color=black] (3.5, -0.12) to (3.5, 0.12);
\draw[-, color=black] (4, -0.12) to (4, 0.12);
\draw[-, color=black] (4.5, -0.12) to (4.5, 0.12);
\draw[-, color=black] (5, -0.12) to (5, 0.12);
\draw[-, color=black] (5.5, -0.12) to (5.5, 0.12);

\draw[-, color=black, dashed] (3.5, 1) to (3.5, -1);

\node[inner sep=0pt,align=center, scale=1] (time) at (3.5,0)
{
\Crossk
};

\node[inner sep=0pt,align=center, scale=0.75] (time) at (1.9,0.75)
{
Intrusion event
};

\node[inner sep=0pt,align=center, scale=0.75] (time) at (0.1,0.55)
{
time-step $t=1$
};

\draw[->, color=black] (0, 0.43) to (0, 0.16);

\node[inner sep=0pt,align=center, scale=0.75] (time) at (4.65,0.65)
{
Intrusion ongoing
};

\node[inner sep=0pt,align=center, scale=0.75] (time) at (6.1,0)
{
$t$
};

\node[inner sep=0pt,align=center, scale=0.75] (time) at (5.56,-0.35)
{
$t=T$
};

\node[inner sep=0pt,align=center, scale=0.75] (time) at (1,-1)
{
Early stopping times
};
\draw[->, color=black] (1, -0.85) to (0.0, -0.25);
\draw[->, color=black] (1, -0.85) to (0.5, -0.25);
\draw[->, color=black] (1, -0.85) to (1, -0.25);
\draw[->, color=black] (1, -0.85) to (1.5, -0.25);
\draw[->, color=black] (1, -0.85) to (2, -0.25);
\draw[->, color=black] (1, -0.85) to (2.5, -0.25);
\draw[->, color=black] (1, -0.85) to (3, -0.25);

\node[inner sep=0pt,align=left, scale=0.75] (time) at (5,-1.05)
{
  Stopping times that\\
  affect the intrusion
};

\draw[->, color=black] (4.55, -0.75) to (3.5, -0.15);
\draw[->, color=black] (4.55, -0.75) to (4, -0.25);
\draw[->, color=black] (4.55, -0.75) to (4.5, -0.25);
\draw[->, color=black] (4.55, -0.75) to (5, -0.25);

\draw [decorate,decoration={brace,amplitude=5pt,mirror,raise=4pt},yshift=0pt,rotate=180, line width=0.25mm]
(-5.45,-0.1) -- (-3.55,-0.1) node [black,midway,xshift=0.1cm] {};

\draw[->, color=black] (2.1,0.55) to (3.3,0.15);

\draw [decorate,decoration={brace,amplitude=5pt,mirror,raise=4pt},yshift=0pt,rotate=180, line width=0.25mm]
(-5.45,-0.75) -- (0,-0.75) node [black,midway,xshift=0.1cm] {};

\node[inner sep=0pt,align=left, scale=0.75] (time) at (2.7,1.25)
{
Episode
};

\end{tikzpicture}
    }
    \caption{Optimal multiple stopping formulation of intrusion prevention; the horizontal axis represents time; $T$ is the time horizon; the episode length is $T-1$; the dashed line shows the intrusion start time; the optimal policy is to prevent the attacker at the time of intrusion.}
    \label{fig:stopping_times}
  \end{figure}
\section{Formalizing The Intrusion Prevention Use Case and Our Reinforcement Learning Approach}\label{sec:formal_model_2}
We first present a formal model of the use case described in Section \ref{sec:use_case} and then we introduce our solution method. Specifically, we first define a POMDP model of the intrusion prevention use case. Then, we apply the theory of dynamic programming and optimal stopping to obtain structural results of an optimal defender policy. Lastly, we describe our reinforcement learning approach to approximate an optimal policy.
\subsection{A POMDP Model of the Intrusion Prevention Use Case}\label{sec:pomdp_model}
We formulate the intrusion prevention use case as a multiple stopping problem where an intrusion starts at a random time and each stop is associated with a defensive action (Fig. \ref{fig:stopping_times}). We model this problem as a POMDP.

\subsubsection{Actions $\mathcal{A}$}\label{sec:actions}
The defender has two actions: ``stop'' ($S$) and ``continue'' ($C$). The action space is thus $\mathcal{A} = \{S,C\}$. We encode $S$ with $1$ and $C$ with $0$ to simplify the formal description below.

The number of stops that the defender must execute to prevent an intrusion is $L \geq 1$, which is a predefined parameter of our use case.
\subsubsection{States $\mathcal{S}$ and Initial State Distribution $\rho_1$}
The system state $s_t \in \{0,1\}$ is zero if no intrusion is occurring and $s_t=1$ if an intrusion is ongoing. In the initial state, no intrusion is occurring and $s_1 = 0$. Hence, the initial state distribution is the degenerate distribution $\rho_1(0) = 1$. Further, we introduce a terminal state $\emptyset \in \mathcal{S}$, which is reached after the defender takes the final stop action or after an intrusion is prevented (see below). The state space is thus $\mathcal{S} = \{0,1,\emptyset\}$.
\subsubsection{Observations $\mathcal{O}$}
The defender has a partial view of the system. If $s_t\neq \emptyset$, the defender observes $o_t = (l_t, \Delta x_t, \Delta y_t, \Delta z_t)$, where $l_t\in \{1,2,\hdots,L\}$ is the number of stops remaining and ($\Delta x_t$, $\Delta y_t$, $\Delta z_t$) are bounded counters that denote the number of severe IDS alerts, warning IDS alerts, and login attempts generated during time-step $t$, respectively. If the system is in the terminal state, the defender observes $o_{T}=\emptyset$. Hence, the observation space is $\mathcal{O} = \{0,\hdots,\Delta x_{max}\} \times \{0,\hdots,\Delta y_{max}\} \times \{0,\hdots,\Delta z_{max}\} \cup \text{ }\emptyset$.
\subsubsection{Transition Probabilities $\mathcal{P}^{a_t}_{s_t,s_{t+1},l_t}$}
We model the start of an intrusion by a Bernoulli process $(Q_t)_{t=1}^{T}$, where $Q_t \sim Ber(p=0.01)$ is a Bernoulli random variable. The time of the first occurrence of $Q_t=1$ is the start time of the intrusion $I_t$, which thus is geometrically distributed, i.e., $I_t \sim Ge(p=0.01)$ (Fig. \ref{fig:geo_only_2}).

\begin{figure}
  \centering
    \scalebox{0.45}{
      \includegraphics{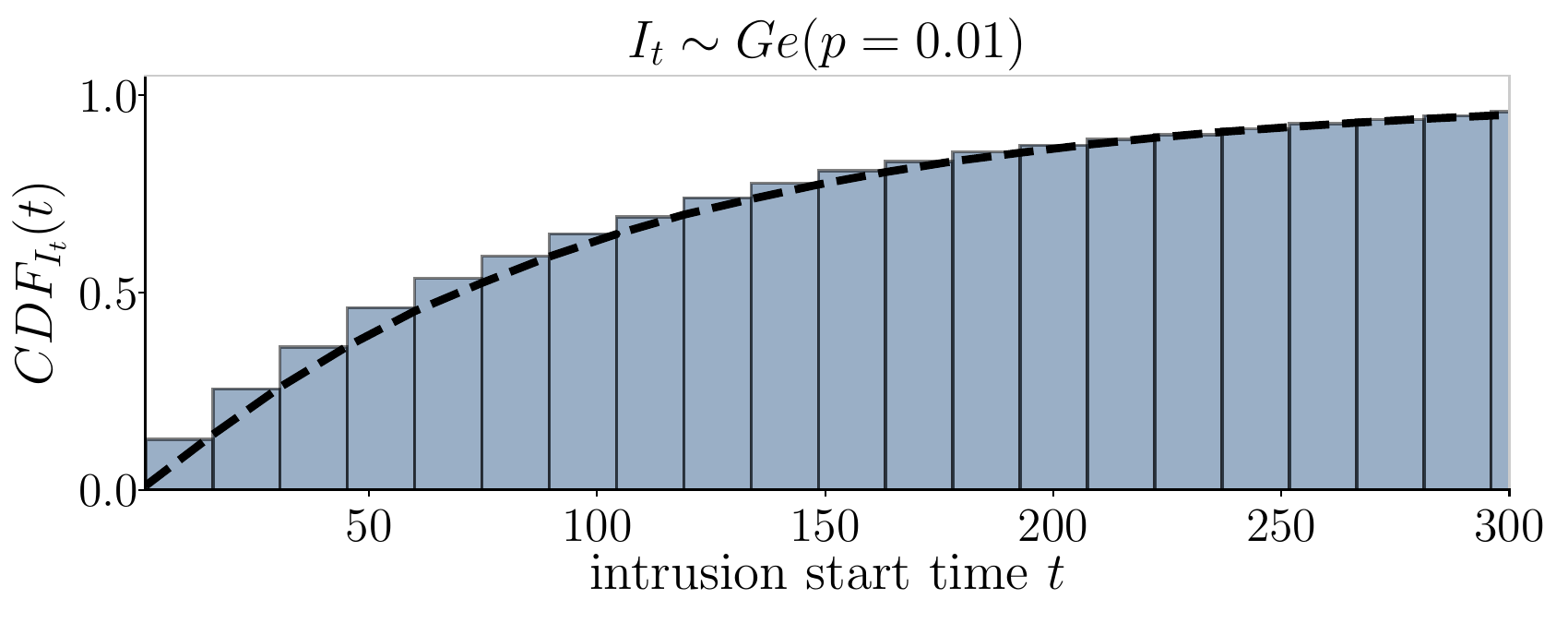}
    }
    \caption{The cumulative distribution function (CDF) of the intrusion start time $I_t$.}
    \label{fig:geo_only_2}
\end{figure}

We define the time-homogeneous transition probabilities $\mathcal{P}^{a_t}_{s_t,s_{t+1},l_t} =\mathbb{P}_{l_t}\left[s_{t+1}| s_t, a_t\right]$ as follows:
\begin{align}
&\mathbb{P}_{1}\left[\emptyset \middle| \cdot , 1\right] = \mathbb{P}_{l_t}\left[\emptyset \middle| \emptyset,\cdot\right]=1 \label{eq:tp_1}\\
&\mathbb{P}_{l_t}\left[0 \middle|0, a_t \right] = 1-p \quad\quad\quad\quad\quad\quad\quad \text{ if $l_t-a_t > 0$}\label{eq:tp_2}\\
&\mathbb{P}_{l_t}\left[1 \middle|0,a_t \right]= p \quad\quad\quad\quad\quad\quad\quad\quad\quad \text{ if $l_t-a_t > 0$}\label{eq:tp_3}\\
&\mathbb{P}_{l_t}\left[1 \middle|1, a_t \right] = 1 \quad\quad\quad\quad\quad\quad\quad\quad\quad \text{ if $l_t-a_t > 0$}\label{eq:tp_4}
\end{align}
where $\mathbb{P}_{l_t}$ denotes the probability with $l_t$ stops remaining. All other state transitions occur with probability $0$.

Eq. \ref{eq:tp_1} defines the transition probabilities to the terminal state $\emptyset$. The terminal state is reached when the \textit{final} ($l_t=1$) stop action $S$ ($a_t = 1$) is taken. If Eq. \ref{eq:tp_1} is not applicable, i.e., if the system does not reach the terminal state, then the transition probabilities when taking action $S$ ($a_t=1$) or $C$ ($a_t=0$) are defined by Eqs. \ref{eq:tp_2}-\ref{eq:tp_4}.

Eq. \ref{eq:tp_2} captures the case where no intrusion occurs and $s_{t+1}=s_t=0$; Eq. \ref{eq:tp_3} specifies the case when the intrusion starts where $s_t=0$ and $s_{t+1}=1$; and Eq. \ref{eq:tp_4} describes the case where an intrusion is in progress and $s_{t+1}=s_t = 1$.

With this definition of the transition probabilities, the evolution of the system can be understood using the state transition diagram in Fig. \ref{fig:state_transitions}.
\subsubsection{Observation Function $\mathcal{Z}(o_{t+1},s_{t+1},a_t)$}\label{sec:obs_fun}
We assume that the number of IDS alerts and login attempts generated during one time-step are discrete random variables $X \sim f_X$, $Y \sim f_Y$, $Z \sim f_Z$ that depend on the state. Consequently, the probability that $\Delta x$ severe alerts, $\Delta y$ warning alerts, and $\Delta z$ login attempts occur during time-step $t$ can be expressed as $f_{XYZ}(\Delta x,\Delta y,\Delta z|s_t)$.

We define the time-homogeneous observation function $\mathcal{Z}(o_{t+1},s_{t+1},a_t) =\mathbb{P}[o_{t+1}|s_{t+1},a_t]$ as follows:
\begin{align}
&\mathcal{Z}\big((l_t,\Delta x,\Delta y,\Delta z),s_t,\cdot \big) = f_{XYZ}(\Delta x,\Delta y,\Delta z |s_t) \label{eq:obs_function}\\
&\mathcal{Z}\big(\emptyset,\emptyset,\cdot\big) = 1
\end{align}\normalsize
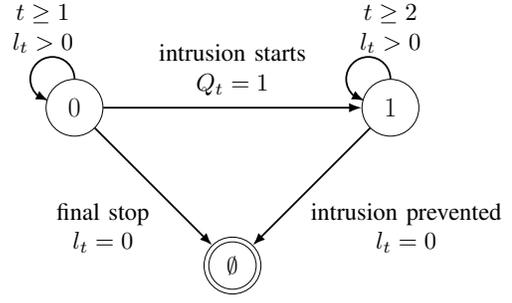
\begin{figure}
  \centering
  \scalebox{1.05}{
          \begin{tikzpicture}[fill=white, >=stealth,
    node distance=3cm,
    database/.style={
      cylinder,
      cylinder uses custom fill,
      shape border rotate=90,
      aspect=0.25,
      draw}]

    \tikzset{
node distance = 9em and 4em,
sloped,
   box/.style = {%
    shape=rectangle,
    rounded corners,
    draw=blue!40,
    fill=blue!15,
    align=center,
    font=\fontsize{12}{12}\selectfont},
 arrow/.style = {%
    line width=0.1mm,
    -{Triangle[length=5mm,width=2mm]},
    shorten >=1mm, shorten <=1mm,
    font=\fontsize{8}{8}\selectfont},
}

\node[scale=0.8] (kth_cr) at (0,2.15)
{
  \begin{tikzpicture}
\node[scale=1] (level1) at (-1.7,-5.6)
{
  \begin{tikzpicture}
\node[draw,circle, minimum width=15mm, scale=0.6](s0) at (0,0) {\huge $0$};
\node[draw,circle, minimum width=15mm, scale=0.6](s1) at (5,0) {\huge $1$};
\node[draw,circle, minimum width=15mm, scale=0.6](s2) at (2.5,-2.5) {};
\node[draw,circle, minimum width=15mm, scale=0.5](s4) at (2.5,-2.5) {\huge$\emptyset$};
\node[inner sep=0pt,align=center, scale=1.1] (time) at (-0.5,1.25)
{
$t\geq 1$\\
$l_t > 0$
};

\node[inner sep=0pt,align=center, scale=1.1] (time) at (5,1.25)
{
$t\geq 2$\\
$l_t>0$
};

\node[inner sep=0pt,align=center, scale=1.1] (time) at (2.5,0.6)
{
intrusion starts\\
$Q_t=1$
};
\node[inner sep=0pt,align=center, scale=1.1,rotate=0] (time) at (0.45,-1.9)
{
final stop\\
$l_t=0$
};
\node[inner sep=0pt,align=center, scale=1.1,rotate=0] (time) at (5.25,-1.9)
{
intrusion prevented\\
$l_t=0$
};
\draw[thick,-{Latex[length=2mm]}] (s0) to (s1);
\draw[thick,-{Latex[length=2mm]}] (s1) to (s2);
\draw[thick,-{Latex[length=2mm]}] (s0) to (s2);
\draw[thick,-{Latex[length=2mm]}] (s0.90) arc (0:260:3.5mm);
\draw[thick,-{Latex[length=2mm]}] (s1.90) arc (0:260:3.5mm);
    \end{tikzpicture}
  };
    \end{tikzpicture}
  };

\end{tikzpicture}
    }
    \caption{State transition diagram of the POMDP: each circle represents a state; an arrow represents a state transition; a label indicates the event that triggers the state transition; an episode starts in state $s_1=0$ with $l_1=L$.}
    \label{fig:state_transitions}
  \end{figure}
\subsubsection{Reward Function $\mathcal{R}_{s_t,l_t}^{a_t}$}\label{sec:reward_fun}
The objective of the intrusion prevention use case is to maintain service on the infrastructure while, at the same time, preventing a possible intrusion. Therefore, we define the reward function to give the maximal reward if the defender maintains service until the intrusion starts and then prevents the intrusion by taking $L$ stop actions.

The reward per time-step $\mathcal{R}^{a_t}_{s_t,l_t}$ is parameterized by the reward that the defender receives for stopping an intrusion ($R_{st}=50$), the reward for maintaining service ($R_{sla}=1$), and the loss of being intruded ($R_{int}=-10$):
\begin{align}
&\mathcal{R}^{\cdot}_{\emptyset,0} = 0 \label{eq:reward_0}\\
&\mathcal{R}^{S}_{s_t,l_t} = s_tR_{st}/4l_t && s_t \in \{0,1\} \label{eq:reward_5}\\
&\mathcal{R}^{C}_{s_t,l_t} = R_{sla} + s_tR_{int}/L && s_t \in \{0,1\}\label{eq:reward_3}
\end{align}\normalsize
Eq. \ref{eq:reward_0} states that the reward in the terminal state is zero. Eq. \ref{eq:reward_5} indicates that each stop incurs a cost by interrupting service and possibly a reward if it affects an ongoing intrusion. Lastly, Eq. \ref{eq:reward_3} states that the defender receives a positive reward for maintaining service and a loss for each time-step that it is under intrusion. (Remark: the reward function can equivalently be stated to give a cumulative reward upon transitioning to the terminal state and zero reward otherwise \cite{puterman}.)
\subsubsection{Time Horizon $T_{\emptyset}$}\label{sec:time_horizon}
The time horizon $T_{\emptyset}$ is a random variable that indicates the time $t$ when the terminal state $\emptyset$ is reached. Since the expected time of intrusion $\mathbb{E}[I_t]$ is finite, it follows that $\mathbb{E}_{\pi_l}\left[T_{\emptyset}\right] < \infty$ for any policy $\pi_l$ that is guaranteed to use $L$ stops as $t\rightarrow \infty$. Further, as the continue reward is negative when $t > I_t$, the optimal stopping times $\tau_1, \hdots, \tau_L$ exist. (Remark: it is also possible to define $T=\infty$ and let $\emptyset$ be an infinitely absorbing state.)
\subsubsection{Policy Space $\Pi_{l}$ and Objective Function $J$}
As the POMDP is stationary and the time horizon $T_{\emptyset}$ is not predetermined, it is sufficient to consider stationary policies. Further, since the POMDP is finite, an optimal deterministic policy exists \cite{puterman,krishnamurthy_2016}. Despite this, we consider stochastic policies to enable smooth optimization. Specifically, we consider the space of stationary stochastic policies $\Pi_{l}$ where $\pi_{l} \in \Pi_{l}$ is a policy $\pi_l: \mathcal{B} \rightarrow \Delta(\mathcal{A})$, which depends on $l \in \{1,\hdots,L\}$.

An \textit{optimal} policy $\pi^{*}_{l} \in \Pi_{l}$ maximizes the expected discounted cumulative reward over the horizon $T_{\emptyset}$:
\begin{align}
&J(\pi_{l}) = \mathbb{E}_{\pi_{l}}\left[\sum_{t=1}^{T_{\emptyset}}\gamma^{t-1}\mathcal{R}^{a_t}_{s_t,l_t} \right]\label{eq:rl_prob2}\\
&\pi^{*}_{l} \in \argmax_{\pi_{l} \in \Pi_{l}}J(\pi_{l})\label{eq:rl_prob3}
\end{align}
We set the discount factor to be $\gamma=1$. (The objective in Eq. \ref{eq:rl_prob2} is upper bounded when $\gamma=1$ since $\mathbb{E}_{\pi_{l}}[T_{\emptyset}]$ is finite for any policy $\pi_{l} \in \Pi_{l}$ that is guaranteed to use $L$ stops as $t\rightarrow \infty$, which is true for any optimal policy (see Lemma \ref{lemma:stops_required} in Appendix \ref{appendix:proof_structural_result_2}).)

Eq. \ref{eq:rl_prob2} defines an optimization problem which reflects the objective of our use case. In the following section, we state structural properties of an optimal policy that solves this problem.
\subsection{Threshold Properties of an Optimal Policy}\label{sec:dp_opt}
A policy that solves the multiple stopping problem is a solution to Eqs. \ref{eq:rl_prob2}-\ref{eq:rl_prob3}. We know from the theory of dynamic programming that this policy satisfies the Bellman equation formulated in terms of the belief state (Eq. \ref{eq:optimal_stopping_1}) \cite{krishnamurthy_2016,pomdp_belief_optimal}.

The belief state $b_t$ is defined as $b_t(s_t)=\mathbb{P}[s_t|h_t]$ (see Section \ref{sec:mdps}). As the state space of the POMDP is $\mathcal{S} = \{0,1,\emptyset\}$ (see Fig. \ref{fig:state_transitions}), $b_t$ is a probability vector with two components: $b_{t}(0) = \mathbb{P}[s_t=0|h_t]$ and $b_{t}(1) = \mathbb{P}[s_t=1|h_t]$, where $t=1, \hdots T_{\emptyset}-1$. Further, since $b_t(0) = 1-b_t(1)$, the belief state is determined by $b_t(1)$ and the \textit{belief space} $\mathcal{B}$ can be described by the unit interval, i.e. $\mathcal{B} = [0,1]$.

We partition $\mathcal{B}$ into two sets\textemdash the stopping set $\mathscr{S}_{l} = \{b(1) \in [0,1] : \pi_l^{*}\big(b(1)\big) = S\}$, which contains the belief states where it is optimal to \textit{stop}, and the continuation set $\mathscr{C}_{l} = \{b(1) \in [0,1] : \pi_l^{*}\big(b(1)\big) = C\}$, which contains the belief states where it is optimal to \textit{continue}. The number of stops remaining, $l$, ranges from $1$ to $L$.

Applying the theory developed in \cite{krishnamurthy_2016,Nakai1985,optimal_multiple_stopping_social_media_1}, we obtain the following structural result for an optimal policy.
\begin{theorem}\label{thm:structural_result_2}
Given the POMDP in Section \ref{sec:pomdp_model}, let $L$ denote the number of stop actions, $f_{XYZ|s}$ the conditional distribution of the observations, $b(1)$ the belief state, $\mathscr{S}_{l}$ the stopping set, and $\mathscr{C}_{l}$ the continuation set. The following holds:
\begin{enumerate}[(A)]
\item
  \begin{align}
\mathscr{S}_{l-1} \subseteq \mathscr{S}_{l} && l \in \{1,\hdots L\} \label{eq:thm_1_0}
\end{align}
\item If $L=1$, there exists a value $\alpha^{*} \in [0,1]$ and an optimal policy $\pi_L^{*}$ that satisfies:
\begin{align}
\pi_L^{*}(b(1)) = S \iff b(1) \geq \alpha^{*} \label{eq:thm_1_1}
\end{align}
\item If $L \geq 1$ and $f_{XYZ|s}$ is totally positive of order 2 (i.e., TP2), there exist $L$ values $\alpha^{*}_{1} \geq \alpha^{*}_{2} \geq \hdots \geq \alpha^{*}_L \in [0,1]$ and an optimal policy $\pi_l^{*}$ that satisfies:
\begin{align}
\pi_l^{*}(b(1)) = S \iff b(1) \geq \alpha_l^{*} \quad l\in \{1,\hdots,L\} \label{eq:thm_1_2}
\end{align}
\end{enumerate}
\end{theorem}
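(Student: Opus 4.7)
The plan is to prove the three parts of the theorem by combining three structural properties: monotonicity of the value function in the number of available stops, convexity of the value function in the belief state, and preservation of the monotone--likelihood--ratio (MLR) order by Bayesian belief updates when the observation kernel is TP2. These ingredients form the backbone of the multiple--stopping POMDP theory developed in Nakai~\cite{Nakai1985}, Krishnamurthy~\cite{krishnamurthy_2016}, and related work, and I would apply them in the order (A), (B), (C), with (C) relying on the techniques already used for (A) and (B).

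For (A), I would first establish that $V_l^*(b) \geq V_{l-1}^*(b)$ for all $b$, which follows from a feasibility argument: with $l$ stops a defender can always emulate an $(l-1)$--stop policy (e.g.\ by spending one stop immediately and then following the optimal $(l-1)$--policy), so the optimal value at level $l$ is at least the optimal value at level $l-1$. Combining this with the structure of the reward (stop reward $sR_{st}/(4l)$ and continue reward $R_{sla}+sR_{int}/L$) and the Bellman recursion in Eq.~\ref{eq:optimal_stopping_1}, I would argue that the post-stop continuation value at level $l$ is $V_{l-1}^*$, which is still substantial, whereas at level $l-1$ stopping loses comparatively more option value. Making the inequality $Q_l(b,S)-Q_l(b,C)\geq Q_{l-1}(b,S)-Q_{l-1}(b,C)$ precise then yields $\mathscr{S}_{l-1} \subseteq \mathscr{S}_l$.

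For (B), with $L=1$ a stop terminates the process, so $Q_1(b,S) = b(1)R_{st}/4$ is affine in $b(1)$. By the classical Smallwood--Sondik representation~\cite{smallwood_1}, $V_1^*(b)$ is convex, so $D_1(b) := Q_1(b,C) - Q_1(b,S)$ is convex in $b(1)$. The absorbing structure of the intrusion state (Eq.~\ref{eq:tp_4}) together with $R_{int} < 0 < R_{sla}$ gives $D_1(0)>0$ (no intrusion: continuing collects $R_{sla}$, stopping yields nothing) and $D_1(1)<0$ (ongoing intrusion: continuing accumulates the per-step cost $R_{int}/L$ indefinitely while stopping yields $R_{st}/4 > 0$). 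A convex function on $[0,1]$ with these endpoint signs has sublevel set $\{b(1) : D_1(b)\leq 0\}$ equal to an interval of the form $[\alpha^*,1]$, which is exactly the desired threshold structure.

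For (C), I would extend the argument to arbitrary $L$ using the TP2 hypothesis on $f_{XYZ|s}$. By the standard result that TP2 observation kernels preserve the MLR order under the Bayesian update of Eq.~\ref{eq:belief_upd} (see~\cite{krishnamurthy_2016}), I would prove by backward induction on $l$ that each $V_l^*(b)$ is convex in $b$ and MLR--monotone decreasing; in the two-state case this reduces to monotonicity in $b(1)$. The argument of (B) then yields a threshold $\alpha_l^*$ at each level, and the inclusion $\mathscr{S}_{l-1}\subseteq\mathscr{S}_l$ from (A) forces $\alpha_{l-1}^*\geq\alpha_l^*$. The main obstacle is the inductive step: pushing MLR--monotonicity through the Bellman operator requires that $\mathbb{E}_o[V_l^*(b'(\cdot))]$ inherit monotonicity from $V_l^*$, and this is exactly where the TP2 assumption is indispensable. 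Without it the expected continuation value may fail to be monotone in $b$ and the stopping set need not be an upper interval, even though the value function remains convex.
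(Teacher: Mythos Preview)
Your argument for (B) is sound and matches the paper, but there are genuine gaps in (A) and (C). For (A), the feasibility argument for $V_l^* \geq V_{l-1}^*$ does not work in this model because the stop reward $\mathcal{R}^S_{s,l} = sR_{st}/(4l)$ depends on $l$: an $l$-stop player who mimics an $(l-1)$-stop policy collects strictly smaller stop rewards along the way (the $j$th stop earns $sR_{st}/(4(l-j+1))$ rather than $sR_{st}/(4(l-j))$), and spending the extra stop up front forfeits a continue reward $R_{sla}$ when $s=0$. More seriously, even if $V_l^* \geq V_{l-1}^*$ held, monotonicity in $l$ is not what you need: expanding $Q_l(b,S)-Q_l(b,C) \geq Q_{l-1}(b,S)-Q_{l-1}(b,C)$ requires $2\mathbb{E}_o[V_{l-1}^*] - \mathbb{E}_o[V_l^*] - \mathbb{E}_o[V_{l-2}^*] \geq \mathcal{R}^S_{b,l-1} - \mathcal{R}^S_{b,l} > 0$, which is a concavity-type bound in $l$, not mere monotonicity. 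The paper instead shows directly, by a value-iteration induction with a four-way case split over the stopping/continuation sets at levels $l,l-1,l-2$, that $W_l^k(b) := \mathcal{R}^S_{b,l}-\mathcal{R}^C_{b,l}+V^k_{l-1}(b)-V^k_l(b)$ is non-decreasing in $l$.

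For (C), ``the argument of (B)'' does not extend: when $l>1$ the stop action no longer terminates, so $Q_l(b,S)=\mathcal{R}^S_{b,l}+\mathbb{E}_o[V_{l-1}^*(b^o)]$ is convex rather than affine, and $Q_l(b,C)-Q_l(b,S)$ is a difference of two convex functions with no guaranteed sign pattern. MLR-monotonicity of $V_l^*$ in $b(1)$ is likewise insufficient on its own; the quantity that must be shown increasing in $b(1)$ is the \emph{difference} $V_{l-1}^*(b)-V_l^*(b)$ (equivalently $Q_l(b,S)-Q_l(b,C)$). The paper establishes this via a second value-iteration induction that uses the TP2 filter monotonicity you correctly identify \emph{together with} the already-proved nesting $\mathscr{S}_{l-1}\subseteq \mathscr{S}_l$ from (A) and the base case $\mathscr{S}_1=[\alpha_1^*,1]$ from (B). Thus (C) depends on (A) and (B) in a way your sketch does not anticipate.
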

\begin{proof}[Proof.]
See Appendix \ref{appendix:proof_structural_result_2}.
\end{proof}
Theorem \ref{thm:structural_result_2}.A states that the stopping sets have a nested structure. This means that if it is optimal to stop when $b(1)$ has a certain value while $l-1$ stops remain, then it is also optimal to stop for the same value when $l$ or more stops remain.

Theorem \ref{thm:structural_result_2}.B and Theorem \ref{thm:structural_result_2}.C state that there exist an optimal policy with threshold properties (see Fig. \ref{fig:threshold_policy_3}). If $L \geq 1$, an additional condition applies: the probability matrix of $f_{XYZ|s}$ must be TP2 (all second order minors must be non-negative) \cite[Definition 10.2.1, pp. 223]{krishnamurthy_2016}\cite{Karlin1964TotalPA}. This condition is satisfied for example if $f_{XYZ|s}$ is stochastically monotone in $s$.

Knowing that there exists optimal policies with special structure has two benefits. First, insight into the structure of optimal policies often leads to a concise formulation and efficient implementation of the policies \cite{puterman, serkan_gyorgy_game}. This is obvious in the case of threshold policies. Second, the complexity of computing or learning an optimal policy can be reduced by exploiting structural properties \cite{krishnamurthy_2016, roy_threshold}. In the following section, we describe a reinforcement learning algorithm that exploits the structural result in Theorem \ref{thm:structural_result_2}.
\begin{figure}
  \centering
  \scalebox{1.05}{
    \begin{tikzpicture}[fill=white, >=stealth,
    node distance=3cm,
    database/.style={
      cylinder,
      cylinder uses custom fill,
      shape border rotate=90,
      aspect=0.25,
      draw}]

    \tikzset{
node distance = 9em and 4em,
sloped,
   box/.style = {%
    shape=rectangle,
    rounded corners,
    draw=blue!40,
    fill=blue!15,
    align=center,
    font=\fontsize{12}{12}\selectfont},
 arrow/.style = {%
    line width=0.1mm,
    -{Triangle[length=5mm,width=2mm]},
    shorten >=1mm, shorten <=1mm,
    font=\fontsize{8}{8}\selectfont},
}

\node[scale=1] (system) at (0,0)
{
\begin{tikzpicture}
\draw[->, color=black] (0.0,0) to (6,0);

\node[inner sep=0pt,align=center, scale=0.8] (time) at (6.3,0)
{
  $b(1)$
};

\node[inner sep=0pt,align=center, scale=0.8] (time) at (0.05,-0.3)
{
$0$
};

\node[inner sep=0pt,align=center, scale=0.8] (time) at (5.75,-0.3)
{
$1$
};

\draw[-, color=black] (5.7,0.1) to (5.7,-0.1);

\draw[-, color=black] (0,0.1) to (0,-0.1);


\draw [decorate,decoration={brace,amplitude=5pt,mirror,raise=4pt},yshift=0pt,rotate=180, line width=0.20mm]
(-5.65,0.1) -- (-4.35,0.1) node [black,midway,xshift=0.1cm] {};


\node[inner sep=0pt,align=center, scale=0.8] (time) at (5.1,0.4)
{
$\mathscr{S}_{1}$
};

\draw [decorate,decoration={brace,amplitude=5pt,mirror,raise=4pt},yshift=0pt,rotate=180, line width=0.20mm]
(-5.65,-0.4) -- (-3.5,-0.4) node [black,midway,xshift=0.1cm] {};

\node[inner sep=0pt,align=center, scale=0.8] (time) at (4.8,0.9)
{
$\mathscr{S}_{2}$
};

\node[inner sep=0pt,align=center, scale=0.8] (time) at (4.1,1.25)
{
$\vdots$
};

\draw [decorate,decoration={brace,amplitude=5pt,mirror,raise=4pt},yshift=0pt,rotate=180, line width=0.20mm]
(-5.65,-1.18) -- (-2.3,-1.18) node [black,midway,xshift=0.1cm] {};

\node[inner sep=0pt,align=center, scale=0.8] (time) at (4.15,1.7)
{
$\mathscr{S}_{L}$
};

\draw[-, color=black] (4.3,0.1) to (4.3,-0.1);

\draw[-, color=black] (3.48,0.1) to (3.48,-0.1);

\draw[-, color=black] (2.35,0.1) to (2.35,-0.1);

\node[inner sep=0pt,align=center, scale=0.8] (time) at (4.3,-0.3)
{
$\alpha^{*}_{1}$
};
\node[inner sep=0pt,align=center, scale=0.8] (time) at (3.48,-0.3)
{
$\alpha^{*}_{2}$
};

\node[inner sep=0pt,align=center, scale=0.8] (time) at (2.35,-0.3)
{
$\alpha^{*}_{L}$
};

\node[inner sep=0pt,align=center, scale=0.8] (time) at (2.8,-0.3)
{
$\hdots$
};







\end{tikzpicture}
};

\end{tikzpicture}
  }
  \caption{Illustration of Theorem \ref{thm:structural_result_2}: there exist $L$ thresholds $\alpha^{*}_{1} \geq \alpha^{*}_{2}, \hdots, \geq \alpha^{*}_{L} \in \mathcal{B}$ and an optimal threshold policy $\pi^{*}_l$ that satisfies Eqs. \ref{eq:thm_1_0}-\ref{eq:thm_1_2}.}
    \label{fig:threshold_policy_3}
  \end{figure}
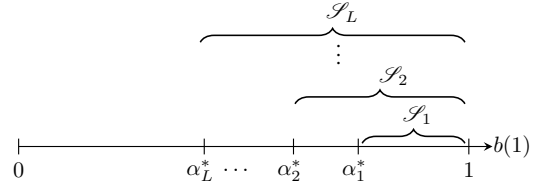

\subsection{Our Reinforcement Learning Algorithm: \textsc{T-SPSA}}\label{sec:rl_approach}
Theorem \ref{thm:structural_result_2} states that under given assumptions and given $L \geq 1$ stop actions, there exists an optimal policy which uses $L$ thresholds $\alpha^{*}_1 \geq \alpha^{*}_2, \hdots, \geq \alpha^{*}_L \in [0,1]$. We present an algorithm, which we call \textsc{T-SPSA}, that computes these thresholds through reinforcement learning.

We parameterize $\pi$ with a vector $\theta \in \mathbb{R}^{L}$. The component $\theta_l$ of $\theta$ relates to the threshold with $l \in \{1, \hdots L\}$ stops remaining. \textsc{T-SPSA} updates $\theta$ through stochastic gradient ascent with the following gradient \cite{policy_gradient_sutton}:
\begin{align}
\nabla_{\theta} J(\theta) &= \mathbb{E}_{\pi_{\theta,l}}\Bigg[\sum_{t=1}^{T_{\emptyset}}\nabla_{\theta}\log\pi_{\theta,l}(a_t|b_t(1))\sum_{\tau=t}^{T_{\emptyset}}\mathcal{R}^{a_{\tau}}_{b_{\tau},l_t}\Bigg] \label{eq:pg_objective}
\end{align}
To ensure differentiability, we define $\pi_{\theta,l}$ to be a smooth stochastic policy that approximates a threshold policy:
\begin{align}
\pi_{\theta,l}\big(S|b(1)\big) = \left(1 + \left(\frac{b(1)(1-\sigma(\theta_l))}{\sigma(\theta_l)(1-b(1))}\right)^{-20}\right)^{-1} \label{eq:smooth_threshold}
\end{align}
where $\sigma(\cdot)$ is the sigmoid function and $\sigma(\theta_{1}), \sigma(\theta_{2}), \hdots, \sigma(\theta_{L}) \in [0,1]$ are the $L$ thresholds.

We learn the threshold vector $\theta$ through simulation of the POMDP as follows. First, we initialize $\theta_{(1)} \in \mathbb{R}^L$ randomly. Second, for each iteration $n \in \{1,2,\hdots \}$ of \textsc{T-SPSA}, we perturb $\theta_{(n)}$ to obtain $\theta_{(n)} + c_n\Delta_n$ and $\theta_{(n)} - c_n\Delta_n$, where $c_n \in \mathbb{R}$ and $\Delta_n \in \mathbb{R}^L$. Then, we run two POMDP episodes where the defender takes actions according to the two perturbed threshold vectors (Eq. \ref{eq:smooth_threshold}). We then use the obtained episode outcomes $\hat{J}(\theta_{(n)} + c_n\Delta_n)$ and $\hat{J}(\theta_{(n)} - c_n\Delta_n)$ to estimate the gradient in Eq. \ref{eq:pg_objective} using the Simultaneous Perturbation Stochastic Approximation (SPSA) gradient estimator \cite{spsa, spsa_impl}:
\begin{align}
\left(\hat{\nabla}_{\theta_{(n)}}J(\theta_{(n)})\right)_{i} &= \frac{\hat{J}(\theta_{(n)} + c_n\Delta_n) - \hat{J}(_{(n)} - c_n\Delta_n)}{2c_n(\Delta_n)_{i}}
\end{align}
where $i\in\{1,\hdots,L\}$ is the component index of the gradient, $c_n = \frac{c}{n^{\lambda}}$ is the perturbation size and $c$ and $\lambda$ are hyperparameters.

The perturbation vector $\Delta_n$ is defined as:
\begin{align}
  (\Delta_n)_i =
  \begin{dcases}
 +1 & \quad \text{with probability $\frac{1}{2}$} \\
  -1 & \quad \text{with probability $\frac{1}{2}$}
\end{dcases}
\end{align}
Next, we use the estimated gradient and the stochastic approximation algorithm \cite{robbins_monro} to update the vector of thresholds to maximize $J(\theta)$ (Eq. \ref{eq:rl_prob2}):
\begin{align}
\theta_{(n+1)} &= \theta_{(n)} + a_n\hat{\nabla}_{\theta_{(n)}}J(\theta_{(n)})
\end{align}
where $a_n = \frac{a}{(n + A)^{\epsilon}}$ is the step size and $A$ and $\epsilon$ are hyperparameters.

This process of running two episodes and updating the threshold vector continues until it has sufficiently converged. The described algorithm, \textsc{T-SPSA}, converges to a \textit{local} maximum of $J(\theta)$ with probability one under standard conditions \cite{spsa}. For this reason, we run the algorithm several times with different initial conditions. We list the pseudocode of \textsc{T-SPSA} in Appendix \ref{appendix:spsa_thresholds} and give its hyperparameters in Appendix \ref{appendix:hyperparameters}. Our Python implementation of \textsc{T-SPSA} is available at: \cite{github_cnsm_21_hammar_stadler}.

\begin{table}
\centering
\begin{tabular}{lll} \toprule
  {\textit{Client}} & {\textit{Functions}} & {\textit{Application servers}} \\ \midrule
  $1$ & HTTP, SSH, SNMP, ICMP & $N_2,N_3,N_{10},N_{12}$\\
  $2$ & IRC, PostgreSQL, SNMP & $N_{31},N_{13},N_{14},N_{15},N_{16}$\\
  $3$ & FTP, DNS, Telnet & $N_{10}, N_{22}, N_{4}$ \\
  \bottomrule\\
\end{tabular}
\caption{Emulated client population; each client interacts with application servers using a set of network functions.}\label{tab:client_profiles}
\end{table}
\begin{table}
  \centering
\resizebox{0.95\columnwidth}{!}{%
\begin{tabular}{lll} \toprule
  {\textit{$L-l_t$}} & {\textit{Action}} & {\textit{Command in the Emulation}} \\ \midrule
  $0$ & Revoke user certificate & \texttt{openssl ca -revoke <certificate>}\\
  $1$ & Blacklist IPs & \texttt{iptables -A INPUT -s <ip> -j DROP}\\
  $2$ & Block gateway & \texttt{iptables -A INPUT -i eth0 -j DROP}\\
  \bottomrule\\
\end{tabular}
}
\caption{Defender stop commands in the emulation.}\label{tab:defender_stop_actions}
\end{table}
\begin{table*}
\centering
\resizebox{1\textwidth}{!}{%
\begin{tabular}{llll} \toprule
  {\textit{Time-steps $t$}} & {\textsc{NoviceAttacker}} & {\textsc{ExperiencedAttacker}} & {\textsc{ExpertAttacker}} \\ \midrule
  $1$-$I_t\sim Ge(0.01)$ & (Intrusion has not started) & (Intrusion has not started) & (Intrusion has not started)\\
  $I_t+1$-$I_t+6$ & $\text{\textsc{Recon}}_1$, brute-force attacks (SSH,Telnet,FTP) & $\text{\textsc{Recon}}_2$, CVE-2017-7494 exploit on $N_4$, & $\text{\textsc{Recon}}_3$, CVE-2017-7494 exploit on $N_4$,\\
                            & on $N_2,N_{4},N_{10}$, login($N_2,N_4,N_{10}$), & brute-force attack (SSH) on $N_2$, login($N_2,N_4$),& login($N_4$), backdoor($N_4$) \\
                            & backdoor($N_2,N_4,N_{10}$) & backdoor($N_2,N_4$), $\text{\textsc{Recon}}_2$ & $\text{\textsc{Recon}}_3$, SQL Injection on $N_{18}$ \\
  $I_t+7$-$I_t+10$ & $\text{\textsc{Recon}}_1$, CVE-2014-6271 on $N_{17}$, & CVE-2014-6271 on $N_{17}$, login($N_{17}$) & login($N_{18}$), backdoor($N_{18}$), \\
  & login($N_{17}$), backdoor($N_{17}$) & backdoor($N_{17}$), SSH brute-force attack on $N_{12}$  & $\text{\textsc{Recon}}_3$, CVE-2015-1427 on $N_{25}$ \\
  $I_t+11$-$I_t+14$ & SSH brute-force attack on $N_{12}$, login($N_{12}$) & login($N_{12}$), CVE-2010-0426 exploit on $N_{12}$, & login($N_{25}$), backdoor($N_{25}$),\\
  & CVE-2010-0426 exploit on $N_{12}$, $\text{\textsc{Recon}}_1$ & $\text{\textsc{Recon}}_2$, SQL Injection on $N_{18}$ & $\text{\textsc{Recon}}_3$, CVE-2017-7494 exploit on $N_{27}$\\
  $I_t+15$-$I_t+16$ & & login($N_{18}$), backdoor($N_{18}$) & login($N_{27}$), backdoor($N_{27}$) \\
  $I_t+17$-$I_t+19$ & & $\text{\textsc{Recon}}_2$, CVE-2015-1427 on $N_{25}$, login($N_{25}$) &  \\
  \bottomrule\\
\end{tabular}
}
\caption{Attacker actions in the emulation.}\label{tab:static_attackers}
\end{table*}
\section{Emulating the Target Infrastructure to Instantiate the Simulation and to Evaluate the Learned Policies}\label{sec:policy_learning_results}
To simulate episodes of the POMDP and to compute the belief state we must know the distributions of alerts and login attempts conditioned on the system state. We estimate these distributions using measurements from the emulation system shown in Fig. \ref{fig:method}. Moreover, to evaluate the performance of policies learned in the simulation system, we run episodes in the emulation system by executing actions of an emulated attacker and having the defender execute stop actions at times given by the learned policies.
\subsection{Emulating the Target Infrastructure}\label{sec:emu_target_inf}
The emulation system executes on a cluster of machines that runs a virtualization layer provided by Docker \cite{docker} containers and virtual links. It implements network isolation and traffic shaping on the containers using network namespaces and the NetEm module in the Linux kernel \cite{netem}. Resource constraints of the containers, e.g. CPU and memory constraints, are enforced using cgroups.

The configuration of the emulated infrastructure is given by the topology in Fig. \ref{fig:system2} and the configuration in Appendix \ref{appendix:infrastructure_configuration}. The system emulates the clients, the attacker, the defender, as well as $31$ physical components of the target infrastructure (e.g application servers and the gateway). Physical entities are emulated and software functions are executed in Docker containers of the emulation system. The software functions replicate important components of the target infrastructure, such as, web servers, databases, and an IDS.

We emulate internal connections between servers in the infrastructure as full-duplex loss-less connections with bit capacities of $1000$ Mbit/s in both directions and emulate external connections between the gateway and the client population and the attacker as full-duplex connections with bit capacities of $100$ Mbit/s with $0.1\%$ packet loss in normal operation and random bursts of $1\%$ packet loss.

The \textit{client population} is emulated by three Docker containers that interact with the application servers through functions and protocols listed in Table \ref{tab:client_profiles}.

The emulation evolves in time-steps. During each step, the defender and the attacker can perform one action each. The \textit{defender} executes either a continue action or a stop action. The continue action has no effect on the progression of the emulation but the stop action has. We have implemented $L=3$ stop actions which are listed in Table \ref{tab:defender_stop_actions}. The \textit{first} stop revokes all user certificates and recovers user accounts compromised by the attacker. The \textit{second} and \textit{third} stops update the firewall configuration of the gateway. Specifically, the \textit{second} stop adds a rule to the firewall that drops incoming traffic from IP addresses that have been flagged by the IDS and the \textit{third} stop blocks \textit{all} incoming traffic.
\begin{table}
\centering
\begin{tabular}{lll} \toprule
  {\textit{Attacker}} & {$L$} & {Reconnaissance} \\ \midrule
  \textsc{NoviceAttacker} & $1$ & TCP/UDP scan\\
  \textsc{ExperiencedAttacker} & $2$ & ICMP ping scan\\
  \textsc{ExpertAttacker} & $3$ & ICMP ping scan \\
  \bottomrule\\
\end{tabular}
\caption{Number of stops required to prevent the attacker $L$ and reconnaissance commands of the attacker profiles.}\label{tab:static_attackers_differences}
\end{table}

We have implemented three \textit{attacker profiles}: \textsc{NoviceAttacker}, \textsc{ExperiencedAttacker}, and \textsc{ExpertAttacker}, all of which execute the sequence of actions listed in Table \ref{tab:static_attackers}, where $I_t$ is the start time of the intrusion. The actions consist of reconnaissance commands and exploits. During each time-step, one action is executed. The three attackers differ in the reconnaissance command that they use and the number of stops $L$ required to prevent the attack (see Table \ref{tab:static_attackers_differences}).

\textsc{NoviceAttacker} uses brute-force attacks to exploit password vulnerabilities (e.g. SSH dictionary attacks) and uses a TCP/UDP port scan for reconnaissance. The attack is prevented if the defender takes a stop action and revokes the user certificates.

\textsc{ExperiencedAttacker} uses a ping scan for reconnaissance and performs both brute-force attacks and more sophisticated attacks, such as a command injection attack (e.g. CVE-2014-6271). The attack is prevented if the defender takes two stop actions and blacklists IP addresses that have been flagged by the IDS in addition to revoking the user certificates.

Lastly, \textsc{ExpertAttacker} only targets vulnerabilities that can be exploited \textit{without} brute-force methods and thus generates less network traffic, for example remote execution vulnerabilities, such as, CVE-2017-7494. The attacker uses a ping scan for reconnaissance like \textsc{ExperiencedAttacker}. The attack is prevented if the defender executes three stop actions and blocks the gateway.

Since the ping-scan generates fewer IDS alerts than the TCP/UDP port scan, the reconnaissance actions of \textsc{ExperiencedAttacker} and \textsc{ExpertAttacker} are harder to detect than those of \textsc{NoviceAttacker}.
\begin{figure}
  \centering
    \scalebox{0.85}{
      \includegraphics{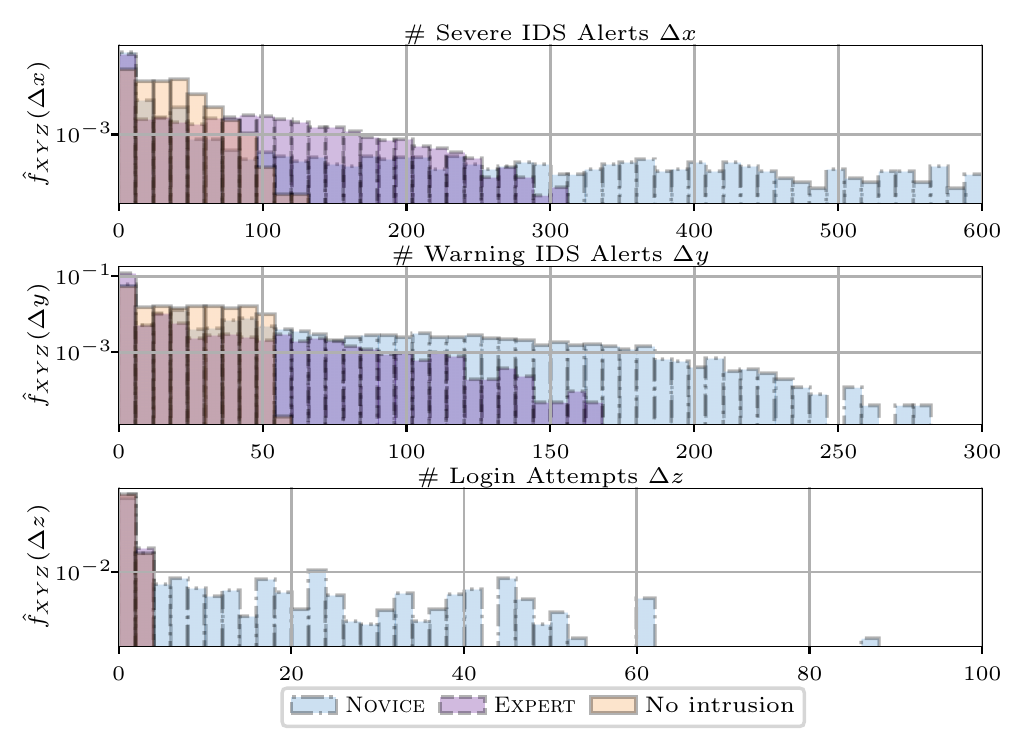}
    }
    \caption{Empirical distributions of severe IDS alerts $\Delta x$ (top row), warning IDS alerts $\Delta y$ (middle row), and login attempts $\Delta z$ (bottom row) generated during time-steps of intrusions by different attackers as well as during time-steps when no intrusion occurs.}
    \label{fig:ids_infra_one_macine_22}
  \end{figure}
\subsection{Estimating the Distributions of Alerts and Login Attempts}
In this section, we describe how we collect data from the emulation system and estimate the distributions of alerts and login attempts.

1) At the end of every time-step, the emulation system collects the metrics $\Delta x$, $\Delta y$, $\Delta z$, which contain the alerts and login attempts that occurred during the time-step. For the evaluation reported in this paper we collected measurements from $21000$ time-steps of $30$ seconds each.

2) From the collected measurements, we compute the empirical distribution $\hat{f}_{XYZ}$ as estimate of the corresponding distribution $f_{XYZ}$ in the target infrastructure. For each state $s_t$, we obtain the conditional distribution $\hat{f}_{XYZ | s_t}$.

Fig. \ref{fig:ids_infra_one_macine_22} shows some of the empirical distributions. The distributions related to \textsc{ExperiencedAttacker} are omitted for better readability. The estimated distributions from \textsc{ExpertAttacker} and \textsc{ExperiencedAttacker} mostly overlap with the distributions obtained when no intrusion occurs. However, a clear difference between the distributions obtained during an intrusion of \textsc{NoviceAttacker} and the distributions when no intrusion occurs can be observed. From these empirical distributions, we note that the assumption that the observation distribution is TP2 in Theorem \ref{thm:structural_result_2}.C is reasonable.
\subsection{Simulating an Episode of the POMDP}\label{sec:simulation_episode}
During a simulation of the POMDP, the system state evolves according to the dynamics described in Section \ref{sec:formal_model_2}, and the observations evolve according to the estimated distribution $\hat{f}_{XYZ}$. In the initial state, no intrusion occurs. During an episode, an intrusion normally occurs at a random start time. It is also possible that the defender performs $L$ stops before the intrusion would start, in which case no intrusion starts.

A simulated episode evolves as follows. The episode starts in state $s_1=0$ and $l_1=L$. During each time-step, the simulation system samples an action from the defender policy $a_t \sim \pi_{\theta,l}(\cdot | b_{t})$. If the action is stop ($a_t=1$) and $l_t=1$, the episode ends. Otherwise, the number of remaining stop actions is updated: $l_{t+1}=l_t-a_t$. Further, if an intrusion is in progress, the system executes an attacker action following Table \ref{tab:static_attackers}. It then updates the state $s_t\rightarrow s_{t+1}$ and samples $\Delta x_{t+1}, \Delta y_{t+1}, \Delta z_{t+1}$ from the empirical distribution $\hat{f}_{XYZ|s_{t+1}}$. (The activities of the clients are not simulated but are captured by $\hat{f}_{XYZ}$.) The simulation then computes the belief $b_{t+1}$ using Eq. \ref{eq:belief_upd} and computes the defender reward $r_{t+1}$ using Eqs. \ref{eq:reward_0}-\ref{eq:reward_3}. (Note that the exact reward can be computed during training and evaluation of policies but not when the policies are deployed in the target infrastructure as it depends on the hidden state.) The sequence of time-steps continues until the defender performs the final stop, after which the episode ends. If the attacker sequence in Table \ref{tab:static_attackers} is completed before the defender performs the final stop, the sequence is restarted.
\subsection{Emulating an Episode of the POMDP}\label{sec:emulation_episode}
Just like a simulated episode, an emulated episode starts with the same initial conditions, evolves in discrete time-steps, and experiences an intrusion event at a random time. However, an episode in the emulation system differs from an episode in the simulation system in the following ways. First, attacker and defender actions in the emulation system include computing and networking functions with real side-effects in the emulation environment (see Table \ref{tab:defender_stop_actions} and Table \ref{tab:static_attackers}). Further, the defender observations in the emulation system are not sampled but are obtained through reading log files and metrics of the emulated infrastructure. Lastly, the emulated client population performs requests to the emulated application servers just like on a real infrastructure (see Section \ref{sec:emu_target_inf}). Due to these differences, running an episode in the emulation system takes much longer time than running a similar episode in the simulation system.
\section{Learning Intrusion Prevention Policies for the Target Infrastructure}\label{sec:eval}
Our approach for finding effective defender policies includes (1) extensive simulation of POMDP episodes in the simulation system to learn the policies; and (2), evaluation of the learned policies through running POMDP episodes in the emulation system. This section describes our evaluation results.

The environment for training policies and running simulations is a Tesla P100 GPU. The hyperparameters for the training algorithm are listed in Appendix \ref{appendix:hyperparameters}. The emulated infrastructure is deployed on a server with a 24-core Intel Xeon Gold 2.10GHz CPU and 768 GB RAM. We have made available the code of our simulation system, as well as the measurement traces used to estimate the observation distributions of the POMDP, which can be used by others to extend and validate our results \cite{github_cnsm_21_hammar_stadler}.
\subsection{Evaluation Process}
\begin{figure*}
  \centering
    \scalebox{0.98}{
      \includegraphics{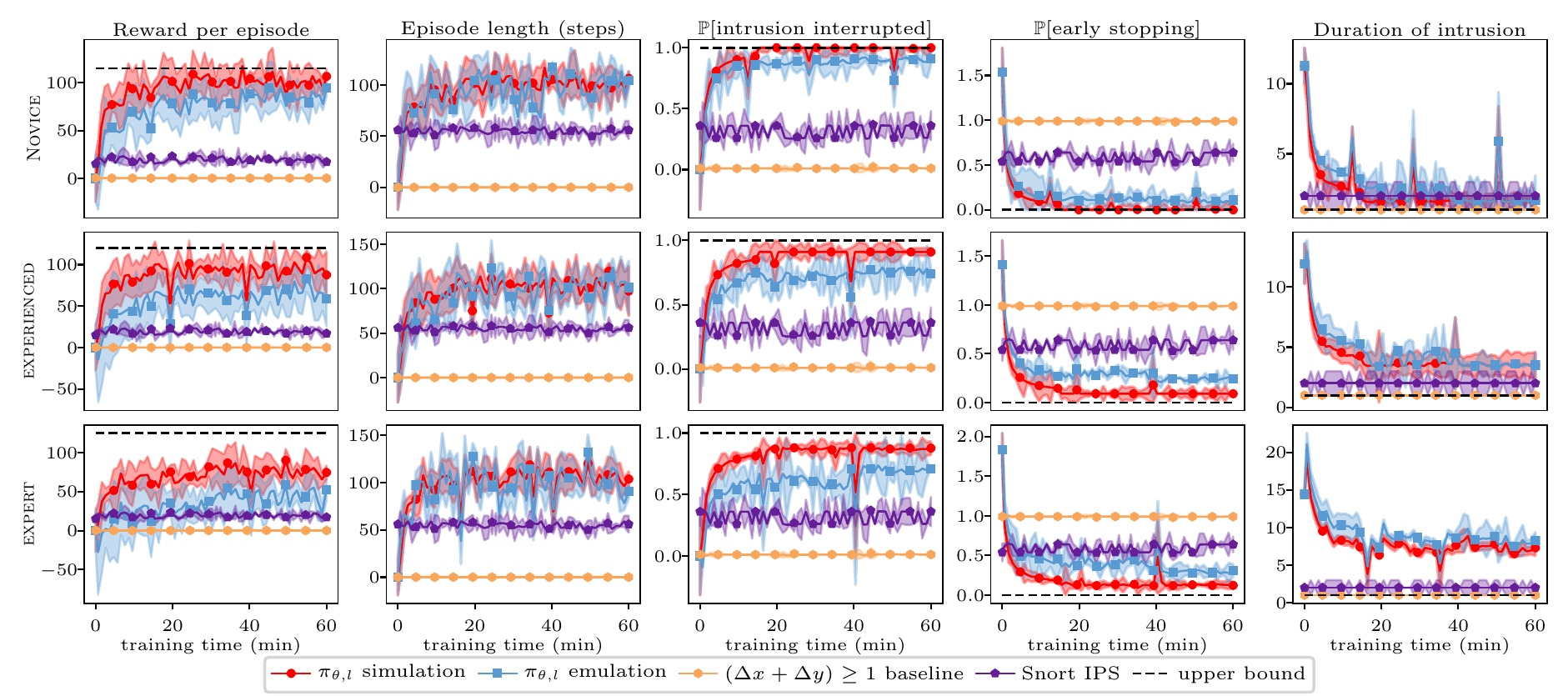}
    }
    \caption{Learning curves obtained during training of \textsc{T-SPSA}; red curves show simulation results and blue curves show emulation results; the purple, orange, and black curves relate to baseline policies; the rows from top to bottom relate to: \textsc{NoviceAttacker}, \textsc{ExperiencedAttacker}, and \textsc{ExpertAttacker}; the columns from left to right show performance metrics: episodic reward, episode length, empirical prevention probability, empirical early stopping probability, and the time between the start of intrusion and the $L$th stop action; the curves show the mean and $95\%$ confidence interval for five training runs with different random seeds.}
    \label{fig:defender_simulation_emulation_multiple_attackers_multiple_stop_tnsm_21_three_stops}
  \end{figure*}

\begin{figure*}
\centering
\scalebox{0.98}{
      \includegraphics{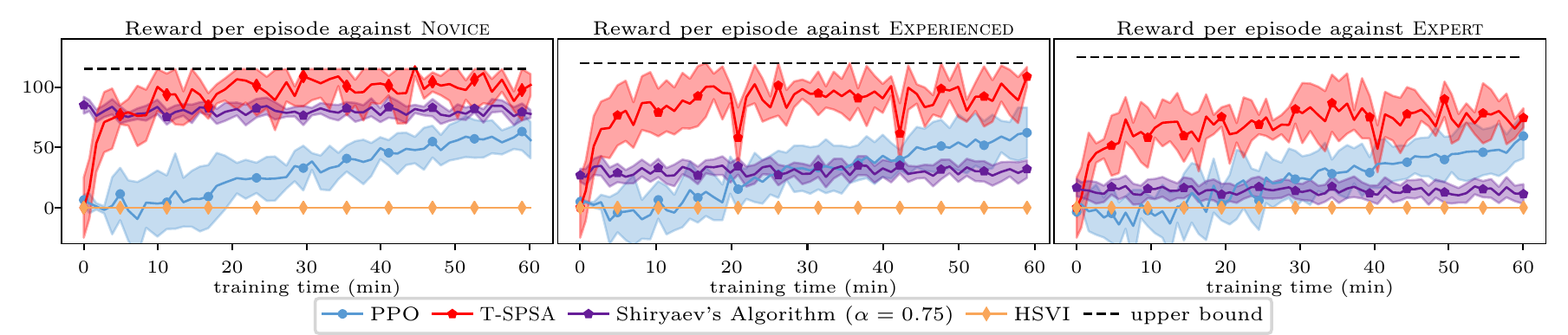}
}
\caption{Comparison between \textsc{T-SPSA} and three baseline algorithms; all curves show simulation results; red curves relate to \textsc{T-SPSA}; blue curves relate to PPO; orange curves relate to HSVI; purple curves relate to Shiryaev's algorithm with threshold $\alpha=0.75$; the columns from left to right relate to: \textsc{NoviceAttacker}, \textsc{ExperiencedAttacker}, and \textsc{ExpertAttacker}; all curves show the mean and $95\%$ confidence interval for five training runs with different random seeds.}
    \label{fig:ppo_spsa_comparison}
  \end{figure*}
We train three defender policies against \textsc{NoviceAttacker}, \textsc{ExperiencedAttacker} and \textsc{ExpertAttacker} until convergence. For each attacker we run $10'000$ training episodes to estimate an optimal defender policy using the method described in Section \ref{sec:rl_approach}. After each episode we evaluate the current defender policy.

To evaluate a defender policy, we run evaluation episodes and compute various performance metrics. Specifically, we run $500$ evaluation episodes in the simulation system and $5$ evaluation episodes in the emulation system.

The $10'000$ training episodes and the evaluation described above constitute one \textit{training run}. We run five training runs with different random seeds. A single training run takes about $4$ hours of processing time on a P100 GPU to perform the simulations and the policy-training, as well as around $12$ hours for evaluating the policies in the emulation system.

We compare the policies learned through \textsc{T-SPSA} with three baseline policies. The first baseline prescribes the stop action whenever an IDS alert occurs, i.e., whenever $(\Delta x+ \Delta y)\geq 1$. The second baseline is obtained by configuring the Snort IDS as an Intrusion Prevention System (IPS) which drops network traffic following its internal recommendation system (see Appendix \ref{appendix:infrastructure_configuration} for the Snort configuration). To calculate the reward, we define $100$ dropped IP packets of the Snort IPS to be a stop action of the defender. Lastly, the third baseline is an ideal policy which presumes knowledge of the exact intrusion time and performs all stop actions at exactly that time.

We evaluate our algorithm, \textsc{T-SPSA}, by comparing it with three baseline algorithms: Proximal Policy Optimization (PPO) \cite{ppo}, Heuristic Search Value Iteration (HSVI) \cite{hsvi}, and Shiryaev's algorithm \cite{shirayev_change_point}. PPO is a state-of-the-art reinforcement learning algorithm, HSVI is a state-of-the-art dynamic programming algorithm for POMDPs, and Shiryaev's algorithm is an optimal algorithm for change detection. The main difference between \textsc{T-SPSA} and the first two baselines (PPO and HSVI) is that \textsc{T-SPSA} exploits the threshold structure expressed in Theorem \ref{thm:structural_result_2} and the main difference in comparison with Shiryaev's algorithm is that \textsc{T-SPSA} learns $L$ thresholds whereas Shiryaev's algorithm uses a single predefined threshold. We set this threshold to $0.75$ based on a hyperparameter search (see Appendix \ref{appendix:hyperparameters}).

\subsection{Learning Intrusion Prevention Policies}\label{sec:one_stop_evaluation}
Fig. \ref{fig:defender_simulation_emulation_multiple_attackers_multiple_stop_tnsm_21_three_stops} shows the performance of the learned policies against the three attacker types. The red curves represent the results from the simulation system and the blue curves show the results from the emulation system. The purple and orange curves give the performance of the Snort IPS baseline and the baseline policy that mandates a stop action whenever an IDS alert occurs, respectively. The dashed black curves give the performance of the baseline policy that assumes knowledge of the exact intrusion time.

An analysis of the graphs in Fig. \ref{fig:defender_simulation_emulation_multiple_attackers_multiple_stop_tnsm_21_three_stops} leads us to the following conclusions. We observe that the learning curves converge quickly to constant mean values for all attackers and across all investigated performance metrics. From this we conclude that the learned policies have converged as well.

Second, we observe that the converged values of the learning curves are close to the dashed black curves, which give an upper bound to any optimal policy. In addition, we see that the empirical probability of preventing an intrusion of the learned policies is close to $1$ (middle column of Fig. \ref{fig:defender_simulation_emulation_multiple_attackers_multiple_stop_tnsm_21_three_stops}) and that the empirical probability of stopping before the intrusion starts is close to $0$ (second rightmost column of Fig. \ref{fig:defender_simulation_emulation_multiple_attackers_multiple_stop_tnsm_21_three_stops}). This suggests that the learned policies are close to optimal. We also observe that all learned policies do significantly better than the Snort IPS baseline and the baseline that stops whenever an IDS alert occurs (leftmost column in Fig. \ref{fig:defender_simulation_emulation_multiple_attackers_multiple_stop_tnsm_21_three_stops}).

Third, although the learned policies, as expected, perform better in the simulation system than in the emulation system, we are encouraged by the fact that the curves of the emulation system are close to those of the simulation system.

We also note from Fig. \ref{fig:defender_simulation_emulation_multiple_attackers_multiple_stop_tnsm_21_three_stops} that the learned policies do best against \textsc{NoviceAttacker} and less well against \textsc{ExperiencedAttacker} and \textsc{ExpertAttacker}. For instance, the learned policies against \textsc{ExperiencedAttacker} and \textsc{ExpertAttacker} are more likely to stop before an intrusion has started (second rightmost column of Fig. \ref{fig:defender_simulation_emulation_multiple_attackers_multiple_stop_tnsm_21_three_stops}). This indicates that \textsc{NoviceAttacker} is easier to detect for the defender as its actions create more IDS alerts than those of the other attackers, as pointed out in Section \ref{sec:emu_target_inf}.

Lastly, Fig. \ref{fig:ppo_spsa_comparison} shows a comparison between our reinforcement learning algorithm (\textsc{T-SPSA}) and the three baseline algorithms in the simulation system. We observe in Fig. \ref{fig:ppo_spsa_comparison} that both \textsc{T-SPSA} and PPO converge to close approximations of an optimal policy within an hour of training whereas HSVI does not converge within the measured time. The slow convergence of HSVI manifests the intractability of using dynamic programming to compute policies in large POMDPs \cite{pspace_complexity}. We also see in Fig. \ref{fig:ppo_spsa_comparison} that \textsc{T-SPSA} converges significantly faster than PPO. This is expected since \textsc{T-SPSA} considers a smaller space of policies than PPO. Finally, we also note in Fig. \ref{fig:ppo_spsa_comparison} that \textsc{T-SPSA} outperforms Shiryaev's algorithm, which demonstrates the benefit of using $L$ thresholds instead of a single threshold.

\section{Related Work}\label{sec:related_work}
Traditional approaches to intrusion prevention use packet inspection and static rules for detection of intrusions and selection of response actions \cite{snort,ids_survey, int_prevention}. Their main drawback lies in the need for domain experts to configure the rule sets. As a consequence, much effort has been devoted to developing methods for finding security policies in an automatic way. This research uses concepts and methods from various areas, most notably from anomaly detection (see example \cite{anomaly_ddetection}), change-point detection (see example \cite{tartakovsky_1}), statistical learning (see examples \cite{fung_ids_distributed,fung_ids_distributed_dirichlet,ml_anomaly_detection}), control theory (see survey \cite{Miehling_control_theoretic_approaches_summary}), game theory (see textbooks \cite{nework_security_alpcan,tambe,carol_book_intrusion_detection,levente_book}), artificial intelligence (see survey \cite{ai_survey}), dynamic programming (see example \cite{dp_security_1}), reinforcement learning (see surveys \cite{deep_rl_cyber_sec,control_rl_reviews}), evolutionary methods (see example \cite{armsrace_malware}), and attack graphs (see example \cite{miehling_attack_graph}).

While the research reported in this paper is informed by all the above works, we limit the following discussion to prior work that centers around finding security policies through reinforcement learning, a topic area that has grown considerably in recent years. Three seminal papers: \cite{rl_seminal}, \cite{rl_seminal_3}, and \cite{rl_seminal_2}, published in $2000$, $2005$, and $2008$, respectively, analyze intrusion prevention use cases and evaluate traditional reinforcement learning algorithms for this task. These papers have inspired much follow-up research, e.g. on studying \textit{deep} reinforcement learning algorithms for intrusion prevention \cite{hammar_stadler_cnsm_20,hammar_stadler_cnsm_21,pentest_rl_rohit} and studying new use cases,  such as defense against jamming attacks \cite{jamming_defense_mowla}, mitigation of denial of service attacks \cite{Malialis2013MultiagentRT,ddos_mitigation_rl}, defense against advanced persistent threats \cite{rl_apts}, placement of honeypots \cite{honeypot_placement_zhu_1}, botnet detection \cite{botnet_detection_1, rl_botnet_evasion}, detection of flip attacks \cite{rl_qcd_kalle}, detection of network traffic anomalies \cite{rl_abnormal_traffic_detection}, greybox fuzzing \cite{fuzzing_rl_nddss_2021_wang}, and defense against topology attacks \cite{topology_attacks_defense}.

Among the recent works that use reinforcement learning to find security policies, many focus on intrusion prevention use cases similar to the one we discuss in this paper \cite{hammar_stadler_cnsm_20,hammar_stadler_cnsm_21, elderman, schwartz_2020, oslo_pentest_rl, kurt_rl, microsoft_red_teaming, ridley_ml_defense, rl_cyberdefense_heartbleed, deep_hierarchical_rl_pentest, pentest_rl_rohit, adaptive_cyber_defense_pomdp_rl, muzero_sdn,atmos,sdn_rl_ddos,deep_air}. These works use a variety of models, including MDPs \cite{oslo_pentest_rl,ridley_ml_defense,deep_hierarchical_rl_pentest,pentest_rl_rohit,deep_air}, Markov games \cite{elderman, hammar_stadler_cnsm_20, muzero_sdn}, and POMDPs \cite{hammar_stadler_cnsm_21,adaptive_cyber_defense_pomdp_rl,kurt_rl}, as well as various reinforcement learning algorithms, including Q-learning \cite{elderman,oslo_pentest_rl,ridley_ml_defense}, SARSA \cite{kurt_rl}, PPO \cite{hammar_stadler_cnsm_20,hammar_stadler_cnsm_21}, hierarchical reinforcement learning \cite{deep_hierarchical_rl_pentest}, DQN \cite{pentest_rl_rohit}, Thompson sampling \cite{adaptive_cyber_defense_pomdp_rl}, MuZero \cite{muzero_sdn}, NFQ \cite{atmos}, DDQN \cite{deep_air}, and DDPG \cite{sdn_rl_ddos}.

This paper differs from the works referenced above in two main ways. First, we formulate the intrusion prevention problem as a multiple stopping problem. The other works formulate the problem as solving a general MDP, POMDP, or Markov game. The advantage of our approach is that we obtain structural properties of optimal policies, which have practical benefits (see Section \ref{sec:dp_opt}).

Problem formulations based on optimal stopping theory can be found in prior research on change detection \cite{shirayev_change_point,page_1,kurt_rl,tartakovsky_1,rl_qcd_kalle,hammar_stadler_cnsm_21}. Compared to these papers, our approach is more general by allowing multiple stop actions within an episode. Another difference is that we model intrusion \textit{prevention} rather than intrusion \textit{detection}. Further, compared with traditional change detection algorithms, e.g. CUSUM \cite{page_1} and Shiryaev's algorithm \cite{shirayev_change_point}, our algorithm \textit{learns} thresholds and does not assume them to be preconfigured.

Second, our solution method to find effective policies for intrusion prevention includes using an emulation system in addition to a simulation system. The advantage of our method compared to the  simulation-only approaches \cite{hammar_stadler_cnsm_20,hammar_stadler_cnsm_21, elderman, schwartz_2020, oslo_pentest_rl, kurt_rl, microsoft_red_teaming, ridley_ml_defense, rl_cyberdefense_heartbleed, deep_hierarchical_rl_pentest, pentest_rl_rohit, adaptive_cyber_defense_pomdp_rl} is that the parameters of our simulation system are determined by measurements from an emulation system instead of being chosen by a human expert. Further, the learned policies are evaluated in the emulation system, not in the simulation system. As a consequence, the evaluation results give higher confidence of the obtained policies' performance in the target infrastructure than what simulation results would provide.

Some prior works on reinforcement learning for intrusion prevention that make use of emulation are: \cite{muzero_sdn}, \cite{atmos}, \cite{sdn_rl_ddos}, and \cite{deep_air}. They emulate software-defined networks based on Mininet \cite{mininet}.  The main differences between these efforts and the work described in this paper are: (1) we develop our own emulation system which allows for experiments with a large variety of exploits; (2) we focus on a different intrusion prevention use case; (3) we do not assume that the defender has perfect observability; and (4), we use an underlying theoretical framework to formalize the use case, derive structural properties of optimal policies, and test these properties in an emulation system.

Finally, \cite{cyborg} and \cite{farland} describe ongoing efforts in building emulation platforms for reinforcement learning, which resemble our emulation system. In contrast to these papers, our emulation system has been built to investigate the specific use case of intrusion prevention and forms an integral part of our general solution method (see Fig. \ref{fig:method}).

\section{Conclusion and Future Work}\label{sec:conclusions}
In this paper, we proposed a novel formulation of the intrusion prevention problem based on the theory of optimal stopping. This formulation allowed us to derive that a threshold policy based on infrastructure metrics is optimal, which has several practical benefits.

To find and evaluate policies, we used a reinforcement learning method that includes a simulation system and an emulation system. In contrast to a simulation-only approach, our method produces policies that can be executed in a target infrastructure.

Through extensive evaluations, we showed that our approach can produce effective defender policies for a practical configuration of an IT infrastructure (Figs. \ref{fig:defender_simulation_emulation_multiple_attackers_multiple_stop_tnsm_21_three_stops}-\ref{fig:ppo_spsa_comparison}). We also demonstrated that our reinforcement learning algorithm (\textsc{T-SPSA}), which takes advantage of the threshold structure (Theorem \ref{thm:structural_result_2}), outperforms state-of-the-art algorithms on our use case.

We make assumptions in this paper that limit the practical applicability of the results: the attacker follows a static policy, and the defender learns only the times of taking defensive actions but not the types of actions. Therefore, the question arises whether our approach can be extended so that (1) the attacker can pursue a wide range of realistic policies and (2) the defender learns optimal policies that express not only when defensive actions needs to be taken but also the specific measure to be executed.

Addressing these points is part of our research agenda. The dynamic attacker can be studied using a game-theoretic extension of the introduced framework. The theory tells us that an optimal solution can be found through self-play in a similar manner as described in the paper, but further work is needed to show that such a solution is feasible in practice. Scenarios involving several attackers can also be studied in this context.

We also plan to extend the defender model to include the selection of defensive actions. One possible approach is to learn two orthogonal policies: a policy that decides when to take a defensive action and another policy that decides which action to take.

\section{Acknowledgments}
This research has been supported in part by the Swedish armed forces and was conducted at KTH Center for Cyber Defense and Information Security (CDIS). The authors would like to thank Pontus Johnson for his useful input to this research and Vikram Krishnamurthy for helpful discussions. The authors are also grateful to Forough Shahab Samani and Xiaoxuan Wang for their constructive comments on a draft of this paper.

\appendices
\section{Proof of Theorem \ref{thm:structural_result_2}.}\label{appendix:proof_structural_result_2}
Given the POMDP introduced in Section \ref{sec:pomdp_model}, let $L$ denote the number of stop actions, $f_{XYZ}$ the observation distribution, $\mathcal{B} = [0,1]$ the belief space (see Section \ref{sec:dp_opt}), $b(1)$ the belief state, $\mathscr{S}_{l}$ the stopping set, and $\mathscr{C}_{l}$ the continuation set.

We use the value iteration algorithm to establish structural properties of $V_l^{*}$ and $\pi_l^{*}$ \cite{puterman,krishnamurthy_2016}. Let $V_l^k$, $\mathscr{S}^k_l$, and $\mathscr{C}^k_l$, denote the value function, the stopping set, and the continuation set at iteration $k$ of the value iteration algorithm, respectively. Let $V_l^0\big(b(1)\big)=0$ for $b(1)\in [0,1]$ and $l\in \{1,\hdots,L\}$. Then, $\lim_{k\rightarrow \infty}V_l^k=V_l^{*}, \lim_{k\rightarrow \infty}\mathscr{S}^k_l=\mathscr{S}_l$, and $\lim_{k\rightarrow \infty}\mathscr{C}^k_l=\mathscr{C}_l$ \cite{puterman,krishnamurthy_2016}.

The main idea behind the proof of Theorem \ref{thm:structural_result_2} is to show that the stopping sets $\mathscr{S}_{l}$ have the form $\mathscr{S}_{l} = [\alpha_l^{*}, 1] \subseteq \mathcal{B}$ and that $\alpha_l^{*} \geq \alpha_{l+1}^{*}$ for $l\in \{1,\hdots,L\}$. Towards this goal, we state the following four lemmas.
\begin{lemma}\label{lemma:stops_required}
During a POMDP episode, an optimal policy $\pi_{L}^{*}$ prescribes $L$ stop actions.
\end{lemma}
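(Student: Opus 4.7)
My approach is to argue by contradiction. Suppose some optimal policy $\pi^*_L$ fails, with positive probability $p > 0$, to execute all $L$ stop actions during the episode. I will show that this forces $J(\pi^*_L) = -\infty$, which contradicts optimality, since a trivial benchmark policy (``stop at every time-step'') terminates by $t = L$ and therefore attains a finite value that lower-bounds $J(\pi^*_L)$.

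First, I would characterise the sample paths on which fewer than $L$ stops occur. Equations \ref{eq:tp_1}--\ref{eq:tp_4} give two structural facts: (i) the terminal state $\emptyset$ can be entered only by taking a stop action while $l_t = 1$; and (ii) once $s_t = 1$, the state remains $1$ under every action that does not lead to $\emptyset$. Consequently, on any path with at most $L-1$ stop actions we have $l_t \geq 1$ for all $t$ and $T_\emptyset = \infty$. Since $I_t \sim Ge(0.01)$ is almost surely finite, such a path also has $s_t = 1$ for all $t > I_t$. By Eq.\ \ref{eq:reward_3} the per-step continue reward in state $1$ equals $R_{sla} + R_{int}/L = 1 - 10/L$, which is strictly negative for the values of $L$ considered. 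The positive contributions to the return are bounded almost surely by the pre-intrusion sum $R_{sla} \cdot I_t$ (finite almost surely) together with at most $L-1$ non-final stop rewards, each at most $R_{st}/4$. With $\gamma = 1$, the infinite negative tail dominates, and the undiscounted cumulative reward diverges to $-\infty$ on the failing event.

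Putting the two steps together, the expected return of $\pi^*_L$ is $-\infty$ whenever $p > 0$, strictly less than the finite value $J(\pi^{\mathrm{triv}})$ of the benchmark policy, contradicting the assumed optimality of $\pi^*_L$. The main delicate point is the bookkeeping in the second step: one must verify that the $R_{sla}$ gains accrued during the pre-intrusion window and the at-most-$L-1$ non-final stop bonuses cannot offset the infinite negative tail. This is handled cleanly by the absorbing-state structure of state $1$ under all non-terminating actions, which confines all positive continue-rewards to the almost-surely finite random window $[1, I_t]$, together with the hard cap $L$ on the total number of stop actions available in any episode.
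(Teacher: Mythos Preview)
Your contradiction argument is correct and spells out in full what the paper dispatches in a single sentence, namely that the claim ``follows directly from the definition of the transition probabilities (see Eqs.~\ref{eq:tp_1}--\ref{eq:tp_4}) and the reward function (see Eqs.~\ref{eq:reward_0}--\ref{eq:reward_3}).'' The only point worth flagging is the one you already hedge on: strict negativity of the post-intrusion continue reward $R_{sla}+R_{int}/L = 1-10/L$ requires $L\leq 9$, which is consistent with the paper's concrete values $L\in\{1,2,3\}$ and with its standing remark in Section~\ref{sec:time_horizon} that the continue reward is negative once $t>I_t$.
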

\begin{proof}[Proof.]
The proof follows directly from the definition of the transition probabilities (see Eqs. \ref{eq:tp_1}-\ref{eq:tp_4}) and the reward function (see Eqs. \ref{eq:reward_0}-\ref{eq:reward_3}).
\end{proof}
\begin{lemma}\label{lemma:convex_stopping_set}
$\mathscr{S}_{1}$ is a convex subset of $\mathcal{B}$.
\end{lemma}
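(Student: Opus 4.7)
The plan is to reduce the claim to the standard POMDP convexity of $V_1^*$ and then exhibit $\mathscr{S}_1$ as the sub-level set of a convex function on the one-dimensional belief space $\mathcal{B}=[0,1]$. First, when $l=1$, Eq. \ref{eq:tp_1} guarantees that any stop action is terminal, so by Eq. \ref{eq:reward_5} the stopping value from belief $b$ is
\begin{equation*}
Q_S(b) \;=\; \mathbb{E}[\mathcal{R}^S_{s,1}\mid b] \;=\; \tfrac{R_{st}}{4}\,b(1),
\end{equation*}
which is affine in the scalar $b(1)$. From Eq. \ref{eq:optimal_stopping_1}, the continuation value is
\begin{equation*}
Q_C(b) \;=\; R_{sla} + \tfrac{R_{int}}{L}\,b(1) + \gamma\sum_{o}\mathbb{P}[o\mid b,C]\,V_1^*(b_C^o),
\end{equation*}
with $b_C^o$ given by Eq. \ref{eq:belief_upd}.

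Second, I would invoke the standard convexity of $V_1^*$ on $\mathcal{B}$ that the excerpt already cites \cite{smallwood_1}. For self-containment I would prove it by induction on value iteration: $V_1^0\equiv 0$ is convex; if $V_1^k$ is convex, then the map $b\mapsto\sum_o \mathbb{P}[o\mid b,C]\,V_1^k(b_C^o)$ is also convex because Eq. \ref{eq:belief_upd} shows that the unnormalized pair $\mathbb{P}[o\mid b,C]\,b_C^o$ is linear in $b$ (the normalizer cancels), so each summand equals the perspective of $V_1^k$ evaluated at an affine image of $b$; thus $Q_C^{k+1}$ is convex, $V_1^{k+1}=\max\{Q_S^{k+1},Q_C^{k+1}\}$ is convex as a pointwise maximum of convex functions, and the limit $V_1^*$ is convex.

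Putting these together, $Q_C$ is convex in $b(1)$ while $Q_S$ is affine, so $Q_C-Q_S$ is convex, and
\begin{equation*}
\mathscr{S}_1 \;=\; \{b(1)\in[0,1] : Q_S(b)\geq Q_C(b)\} \;=\; \{b(1)\in[0,1] : (Q_C-Q_S)(b)\leq 0\}
\end{equation*}
is the intersection of $[0,1]$ with a sub-level set of a convex function, hence a convex (i.e., interval) subset of $\mathcal{B}$. The main obstacle I anticipate is the convexity-preservation step inside the value-iteration induction, namely verifying that $b\mapsto\sum_o\mathbb{P}[o\mid b,C]\,V_1^k(b_C^o)$ is convex. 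Because $|\mathcal{S}|=3$ and the belief is one-dimensional, this reduces to a short perspective-of-a-convex-function calculation on the two nontrivial observation branches of Eq. \ref{eq:belief_upd}, but it is the one place where care must be taken to distinguish the normalized from the unnormalized updates.
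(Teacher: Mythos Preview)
Your proposal is correct and follows the standard argument: the optimal value function $V_1^{*}$ is convex on the belief simplex, the stop value $Q_S$ is affine in $b(1)$ because the final stop is terminal (Eq.~\ref{eq:tp_1}), and therefore $\mathscr{S}_1=\{b(1):Q_C(b)-Q_S(b)\le 0\}$ is a sub-level set of a convex function. The paper does not give its own proof but simply cites Theorem~12.2.1 in Krishnamurthy's monograph and an earlier paper by the same authors; the argument in those references is exactly the convexity-of-$V^{*}$ plus affine-stop-reward argument you outline, so your approach coincides with the intended one.
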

\begin{proof}[Proof.]
The proof can be found in \cite[pp. 10, Lemma 3]{hammar_stadler_cnsm_21} and in \cite[pp. 258, Theorem 12.2.1]{krishnamurthy_2016}.
\end{proof}
\begin{lemma}\label{lemma:p_r_tp2}
$\mathcal{P}^{a_t}_{s_t,s_{t+1},l_t}$ is TP2 and $\mathcal{R}^{S}_{b(1),l_t}-\mathcal{R}^{C}_{b(1),l_t}$ is increasing in $b(1)$ for $l_t\in \{1,\hdots,L\}$.
\end{lemma}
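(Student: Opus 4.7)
The plan is to prove each of the two claims separately by direct verification against the definitions given in Section \ref{sec:pomdp_model}. Both are essentially computational: the hard part is simply being careful about the cases built into the transition probabilities and reward function, and writing the matrix in the correct state ordering before checking its $2\times 2$ minors.

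First, I would handle the TP2 property of $\mathcal{P}^{a_t}_{s_t,s_{t+1},l_t}$. Using Eqs.\ \ref{eq:tp_1}--\ref{eq:tp_4}, I would write the transition matrix row by row with the state ordering $(0,1,\emptyset)$. There are two cases to consider. In the non-terminating case $l_t-a_t>0$, the matrix takes the form
\begin{equation*}
P = \begin{pmatrix} 1-p & p & 0 \\ 0 & 1 & 0 \\ 0 & 0 & 1 \end{pmatrix},
\end{equation*}
and in the terminating case ($l_t=1$, $a_t=1$) every row equals $(0,0,1)$. For TP2 it suffices to verify that every $2\times 2$ minor formed by rows $i<j$ and columns $k<\ell$ is non-negative \cite[Def.~10.2.1]{krishnamurthy_2016}. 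In the terminating case every such minor is zero, and in the non-terminating case direct evaluation gives the values $1-p$, $0$, $0$, $0$, $1-p$, $p$, $0$, $0$, $1$, all of which are non-negative since $p\in(0,1)$. This establishes TP2.

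Second, I would handle the monotonicity of the reward gap. Because the immediate reward under belief $b(1)$ is the expectation of $\mathcal{R}^a_{s_t,l_t}$ with respect to $b_t$, and since $\mathcal{S}=\{0,1,\emptyset\}$ but $\emptyset$ is absorbing with zero reward (Eq.\ \ref{eq:reward_0}), I can write
\begin{align*}
\mathcal{R}^{S}_{b(1),l_t} &= b(1)\,\frac{R_{st}}{4l_t}, \\
\mathcal{R}^{C}_{b(1),l_t} &= R_{sla} + b(1)\,\frac{R_{int}}{L},
\end{align*}
using Eqs.\ \ref{eq:reward_5}--\ref{eq:reward_3}. Subtracting gives
\begin{equation*}
\mathcal{R}^{S}_{b(1),l_t}-\mathcal{R}^{C}_{b(1),l_t} = \left(\frac{R_{st}}{4l_t}-\frac{R_{int}}{L}\right) b(1) - R_{sla}.
\end{equation*}
Since $R_{st}=50>0$, $R_{int}=-10<0$, and $l_t,L\ge 1$, the coefficient of $b(1)$ is strictly positive, so the difference is (strictly) increasing in $b(1)$ for every $l_t\in\{1,\ldots,L\}$.

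The main obstacle, if any, is bookkeeping rather than mathematical depth: one must be careful to include the terminal state in the state ordering when writing $P$, to restrict the monotonicity claim to the non-terminal support of $b$ (the state $\emptyset$ contributes zero to both rewards and so drops out), and to note that in the degenerate $(l_t=1,a_t=1)$ case the transition matrix is trivially TP2. Once these cases are laid out, each claim reduces to a short, explicit inequality check.
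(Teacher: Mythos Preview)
Your proposal is correct and follows essentially the same approach as the paper: write out the two transition matrices explicitly (non-terminating and terminating cases), verify all nine second-order minors are non-negative, and then expand the reward difference as an affine function of $b(1)$ with a positive slope. The only cosmetic differences are that the paper plugs in $p=0.01$ and $R_{st},R_{int},R_{sla}=50,-10,1$ immediately, and indexes the minors by the deleted row/column rather than by the retained rows/columns; your slightly more general bookkeeping (keeping $p$ and the reward constants symbolic) is arguably cleaner but otherwise identical in content.
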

\begin{proof}[Proof.]
The transition probabilities (see Section \ref{sec:pomdp_model}) are given by the following two row-stochastic matrices:
\begin{align}
\kbordermatrix{
    & 0 & 1 & \emptyset \\
  0 & 0.99 & 0.01 & 0\\
  1 & 0 & 1 & 0\\
  \emptyset & 0 & 0 & 1
},\quad\quad
\kbordermatrix{
    & 0 & 1 & \emptyset \\
  0 & 0 & 0 & 1\\
  1 & 0 & 0 & 1\\
  \emptyset & 0 & 0 & 1
},
\end{align}
The left matrix corresponds to the transition probabilities when $a_t=C$, or, when $a_t=S$ and $l_t>1$. The right matrix represents the transition probabilities when $a_t=S$ and $l_t=1$. To show that $\mathcal{P}^{a_t}_{s_t,s_{t+1},l_t}$ is TP2, it is sufficient to show that all $\binom{3}{2}^2$ second order minors of both matrices are non-negative. The second-order minors of the first matrix are $M_{1,2}=M_{1,3}=M_{2,3}=M_{3,1}=M_{3,2}=0$, $M_{1,1}=1$, $M_{2,1}=0.01$, $M_{2,2}=M_{3,3}=0.99$, where $M_{i,j}$ denotes the determinant of the submatrix formed by deleting the $i$th row and $j$th column. For the second matrix all second order minors are zero. Hence, $\mathcal{P}^{a_t}_{s_t,s_{t+1},l_t}$ is TP2.

$\mathcal{R}^{S}_{b(1),l_t}-\mathcal{R}^{C}_{b(1),l_t}$ is expanded to:
\begin{align}
\mathcal{R}^{S}_{b(1),l_t}-\mathcal{R}^{C}_{b(1),l_t} &= b(1)\left(\frac{50}{4l_t} + 10/L\right) - 1
\end{align}
which is increasing in $b(1)$.
\end{proof}
\begin{lemma}\label{lemma:monotone_filter}
  Given two beliefs $b^{\prime}(1) \geq b(1)$ and two observations $o \geq \bar{o}$, if $\mathcal{P}^{a_t}_{s_t,s_{t+1},l_t}$ and $f_{XYZ|s}$ are TP2, then the following holds for any $a \in \mathcal{A}$, $k \in \mathcal{O}$, and $l_t \in \{1,\hdots, L\}$:
  \begin{enumerate}
  \item $b_{a}^{\prime,o}(1) \geq b_{a}^{o}(1)$
  \item $\mathbb{P}[o \geq k |b^{\prime},a] \geq \mathbb{P}[o \geq k |b,a]$
  \item $b_{a}^{o}(1) \geq b_{a}^{\bar{o}}(1)$
  \end{enumerate}
where $b^{\prime,o}_{a}(1)$ and $b^o_{a}(1)$ denote the beliefs updated with Eq. \ref{eq:belief_upd} after taking action $a \in \mathcal{A}$ and observing $o \in \mathcal{O}$.
\end{lemma}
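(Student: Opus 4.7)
The plan is to prove all three parts using the standard machinery of monotone likelihood ratio (MLR) orders on belief spaces together with TP2-preservation properties of Bayesian updates, as developed in \cite{krishnamurthy_2016}. Since the non-terminal part of the state space is the two-element set $\{0,1\}$, and conditioning on $o \ne \emptyset$ places zero posterior mass on the terminal state, MLR dominance between beliefs reduces to the scalar comparison $b'(1)\ge b(1)$. This identification is what lets the three claims be stated in terms of $b(1)$ alone, and I would invoke it at the outset of the proof.

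For part (1), I would substitute $b$ and $b'$ into the belief-update formula (Eq.\ \ref{eq:belief_upd}) and compute the likelihood ratio $[b^{\prime,o}_a(1)/b^{\prime,o}_a(0)]/[b^o_a(1)/b^o_a(0)]$. The observation factors $f_{XYZ|s}(o)$ cancel because $o$ is common to numerator and denominator, so the ratio reduces to one determined entirely by the one-step predictive distributions $\sum_{s_t}\mathcal{P}^a_{s_t,s_{t+1},l_t}\,b(s_t)$. Since $\mathcal{P}^a_{\cdot,\cdot,l_t}$ is TP2 by Lemma \ref{lemma:p_r_tp2}, the standard fact that TP2 transition kernels preserve MLR ordering (Krishnamurthy, Ch.\ 10) gives that this ratio is $\ge 1$ exactly when $b'(1)\ge b(1)$.

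For part (2), I would write $\mathbb{P}[o \ge k \mid b, a]=\sum_{s_t}b(s_t)\,g(s_t)$, where $g(s_t):=\mathbb{P}[o\ge k\mid s_t,a]$. It suffices to show $g$ is nondecreasing in $s_t$ and then to apply first-order stochastic dominance, which on the two-point support $\{0,1\}$ is equivalent to the MLR comparison $b'(1)\ge b(1)$. Monotonicity of $g$ follows in two steps: TP2 of $\mathcal{P}$ makes the predictive distribution over $s_{t+1}$ stochastically increasing in $s_t$, and TP2 of $f_{XYZ|s}$ makes the upper-tail function $s_{t+1}\mapsto\mathbb{P}[o\ge k\mid s_{t+1}]$ nondecreasing; composing these yields the claim. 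For part (3), the ratio $[b^o_a(1)/b^o_a(0)]/[b^{\bar o}_a(1)/b^{\bar o}_a(0)]$ collapses, after cancellation of the predictive factors, to $[f_{XYZ|1}(o)\,f_{XYZ|0}(\bar o)]/[f_{XYZ|0}(o)\,f_{XYZ|1}(\bar o)]$, which is precisely the $2\times 2$ minor of $f_{XYZ|s}$ at rows $\{\bar o, o\}$ and columns $\{0,1\}$; TP2 of $f_{XYZ|s}$ makes this minor nonnegative, so the ratio is $\ge 1$ and $b^o_a$ MLR-dominates $b^{\bar o}_a$.

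The main obstacle I anticipate is notational rather than mathematical: the presence of the terminal state $\emptyset$ and the associated absorbing observation $\emptyset$ require care so that the reduction to a two-state MLR argument is actually valid, particularly in the degenerate branch $a=S$, $l_t=1$ where the posterior collapses onto $\emptyset$ and the ratios in parts (1) and (3) become indeterminate. I would dispatch that case separately by noting that the lemma's three inequalities hold trivially there (the post-stop belief is independent of both $b$ and $o$), and only then carry out the TP2 computations above on the $\{0,1\}$ block. Aside from this bookkeeping, each step is a routine application of the TP2 $2\times 2$ minor inequality, specializing the general filtering results of \cite[Ch.\ 10]{krishnamurthy_2016} to the two-state setting.
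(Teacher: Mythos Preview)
Your proposal is correct and follows essentially the same approach as the paper: the paper's proof consists solely of a citation to \cite[Theorem 10.3.1]{krishnamurthy_2016} together with the remark that, since $|\mathcal{S}\setminus\emptyset|=2$, the MLR order reduces to the scalar order $b'(1)\ge b(1)$. Your write-up simply unpacks that citation, carrying out the TP2/MLR computations explicitly and handling the terminal-state bookkeeping; there is no methodological difference.
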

\begin{proof}[Proof.]
The proof is published in \cite[Theorem 10.3.1, pp. 225,238]{krishnamurthy_2016}. (Remark: in the referenced proof, the monotone likelihood ratio (MLR) order is considered; in our case $|\mathcal{S}\setminus \emptyset|=2$, hence the MLR order reduces to the natural order $b^{\prime}(1) \geq b(1)$.)
\end{proof}
\begin{proof}[Proof of Theorem \ref{thm:structural_result_2}.A.]
The proof has originally been published in \cite[Propositions 4.5-4.8, pp. 437-441]{Nakai1985}. It is also available in a more accessible form in \cite[Theorem 1.C, Theorem 8, pp. 389-397]{optimal_multiple_stopping_social_media_1}. We give our own version of the proof since the referenced proofs assume zero reward for the continue action and assume that rewards are independent of $l$.

If $b(1) \in \mathscr{S}_{l-1}$, the Bellman equation and the fact that $\mathbb{P}[o|a,b]=\mathbb{P}[o|b]=\mathbb{P}^o_{b(1)}$ for all $a \in \mathcal{A}$ and $o\neq \emptyset$ (see Eq. \ref{eq:obs_function}) implies that:
\begin{align}
  &\mathcal{R}^S_{b(1),l-1} - \mathcal{R}^C_{b(1),l-1} + \label{eq:proof_ind_1}\\
&  \sum_{o \in \mathcal{O}} \mathbb{P}^o_{b(1)}\Big(V^{*}_{l-2}\big(b^o(1)\big) - V^{*}_{l-1}\big(b^o(1)\big)\Big)\geq 0\nonumber
\end{align}
\normalsize We show that $b(1) \in \mathscr{S}_{l}$ follows from the above inequality.

Let $W^k_l\big(b(1)\big) = \mathcal{R}^S_{b(1),l} - \mathcal{R}^C_{b(1),l} + V^{k}_{l-1}\big(b(1)\big) - V^{k}_{l}\big(b(1)\big)$. To show that $b(1) \in \mathscr{S}_{l-1} \implies b(1) \in \mathscr{S}_{l}$, it is sufficient to show that $W^k_l(b(1))$ is non-decreasing in $l$ for all $k\geq 0$. We proceed to show this statement by mathematical induction.

For iteration $k=0$ of value iteration, $W^0_l\big(b(1)\big)=V^{0}_{l}\big(b(1)\big)-V^{0}_{l-1}\big(b(1)\big)=0$, which is trivially non-decreasing in $l$. Assume by induction that $W^{k-1}_{l}\big(b(1)\big)$ is non-decreasing in $l$ for iterations $k-1,k-2,\hdots,1$. To show that $W^{k}_l\big(b(1)\big)$ is non-decreasing in $l$ also for iteration $k$, we show that $W^{k}_{l}\big(b(1)\big) - W^{k}_{l-1}\big(b(1)\big) \geq 0$.

There are four cases to consider:
\begin{enumerate}
\item If $b(1) \in \mathscr{S}^k_l \cap \mathscr{S}^k_{l-1} \cap \mathscr{S}^k_{l-2}$, then:
\begin{align}
         &W^{k}_{l}\big(b(1)\big) - W^{k}_{l-1}\big(b(1)\big) \\
         &= \sum_{o \in \mathcal{O}}\mathbb{P}_{b(1)}^{o}\Big(W^{k-1}_{l-1}\big(b^o(1)\big)-W^{k-1}_{l-2}\big(b^o(1)\big)\Big)\nonumber
\end{align}\normalsize
which is non-negative by the induction assumption.
\item If $b(1) \in \mathscr{S}^k_l \cap \mathscr{S}^k_{l-1} \cap \mathscr{C}^k_{l-2}$, then:
\begin{align}
&W^{k}_{l}\big(b(1)\big) - W^{k}_{l-1}\big(b(1)\big) =\mathcal{R}_{b(1),l-1}^{S} - \mathcal{R}_{b(1),l-1}^{C}\\
         & + \sum_{o \in \mathcal{O}}\mathbb{P}_{b(1)}^{o}\Big(V^{k-1}_{l-2}\big(b^o(1)\big) - V^{k-1}_{l-1}\big(b^o(1)\big)\Big)\nonumber
\end{align}\normalsize
which is non-negative because $b(1) \in \mathscr{S}^k_{l-1}$ (it is implied by Eq. \ref{eq:optimal_stopping_1}).

\item If $b(1) \in \mathscr{S}^k_l \cap \mathscr{C}^k_{l-1} \cap \mathscr{C}^k_{l-2}$, then:
\begin{align}
&W^{k}_{l}\big(b(1)\big) - W^{k}_{l-1}\big(b(1)\big) = \mathcal{R}_{b(1),l-1}^{C} - \mathcal{R}_{b(1),l-1}^{S}\label{eq:kalai_23}\\
         & +\sum_{o \in \mathcal{O}}\mathbb{P}_{b(1)}^{o}\Big(V^{k-1}_{l-1}\big(b^o(1)\big) - V^{k-1}_{l-2}\big(b^o(1)\big)\Big)\nonumber
\end{align}\normalsize
which is non-negative because $b(1) \in \mathscr{C}^k_{l-1}$ (it is implied by Eq. \ref{eq:optimal_stopping_1}).

\item If $b(1) \in \mathscr{C}^k_l \cap \mathscr{C}^k_{l-1} \cap \mathscr{C}^k_{l-2}$, then:
 \begin{align}
&W^{k}_{l}\big(b(1)\big) - W^{k}_{l-1}\big(b(1)\big) =\\
& \sum_{o \in \mathcal{O}}\mathbb{P}_{b(1)}^{o}\Big(W^{k-1}_{l}\big(b^o(1)\big)-W^{k-1}_{l-1}\big(b^o(1)\big)\Big)
\end{align}\normalsize
which is non-negative by the induction assumption.
\end{enumerate}
The other cases, e.g. $b(1) \in \mathscr{C}^k_{l} \cap \mathscr{C}^k_{l-1}\cap \mathscr{S}^k_{l-2}$, can be discarded due to the induction assumption. Hence, $W^{k}_l(b(1))$ is non-decreasing in $l$ for all $k \geq 0$.

Since the left-hand side of Eq. \ref{eq:proof_ind_1} is non-decreasing in $l$ it follows that if Eq. \ref{eq:proof_ind_1} holds, i.e. if $b(1) \in \mathscr{S}_{l-1}$, then $b(1) \in \mathscr{S}_{l}$.
\end{proof}
\begin{proof}[Proof of Theorem \ref{thm:structural_result_2}.B.]
The proof follows the chain of reasoning in \cite[Corollary 12.2.2, pp. 258]{krishnamurthy_2016}.

Using Lemma \ref{lemma:convex_stopping_set}, we know that the stopping set $\mathscr{S}_{1}$ is a convex subset of $\mathcal{B}=[0,1]$. That is, it has the form $[\alpha^{*}, \beta^{*}]$ where $0 \leq \alpha^{*} \leq \beta^{*} \leq 1$. We show that $\beta^{*} = 1$.

If $b(1) = 1$, the Bellman equation (Eq. \ref{eq:optimal_stopping_1}) states that:
\begin{align}
&\pi_{1}^{*}(1) \in \argmax_{\{S,C\}} \Bigg[\underbrace{50 + V^{*}_{0}(\emptyset)}_{a=S}, \nonumber\\
&\underbrace{-9 + \sum_{o\in \mathcal{O}}\mathcal{Z}(o,1,C)V_{1}^{*}\big(b_C^o(1)\big)}_{a=C}\Bigg]\label{eq:bellman_proof_thm_1_b}
\end{align}\normalsize
As $L=1$, it follows from Lemma \ref{lemma:stops_required} that an optimal policy prescribes one stop action during a POMDP episode and that the intrusion is prevented after the first stop. Hence, $V^{*}_{0}(\emptyset)=\mathcal{R}^{\cdot}_{\emptyset,0}=0$. Moreover, since $s=1$ is an absorbing state until the stop, it follows from the definition of $b_C^o$ (Eq. \ref{eq:belief_upd}) that $b_C^o(1)=1$ for all $o \in \mathcal{O} \setminus \emptyset$. Thus, since $V_{1}^{*}(1) \leq 50$ (see Eqs. \ref{eq:reward_0}-\ref{eq:reward_3}), we get:
\begin{align}
&\pi_{1}^{*}(1) \in \argmax_{\{S,C\}} \Bigg[\underbrace{50}_{a=S}, \underbrace{-9 + V_{1}^{*}(1)}_{a=C}\Bigg] = S
\end{align}\normalsize
This means that $\beta^{*}=1$ and therefore $\mathscr{S}_{1} = [\alpha^{*}, 1]$.
\end{proof}

\begin{corollary}\label{corollary:connected_stopping_set}
If $\mathcal{P}^{a_t}_{s_t,s_{t+1},l_t}$ and $f_{XYZ|s}$ are TP2, the stopping set $\mathscr{S}_{l}$ is connected, $l \in \{1,\hdots,L\}$.
\end{corollary}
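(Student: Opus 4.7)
The plan is to show that the stopping set takes the form $\mathscr{S}_l = [\alpha_l^{*}, 1] \subseteq [0,1]$ and is therefore connected. Since
$$\mathscr{S}_l = \left\{b(1) \in [0,1] \;:\; Q_l^{*}(b(1),S) - Q_l^{*}(b(1),C) \geq 0\right\},$$
it suffices to establish that the Q-value difference $\phi_l(b(1)) := Q_l^{*}(b(1),S) - Q_l^{*}(b(1),C)$ is non-decreasing in $b(1)$. Monotonicity forces $\mathscr{S}_l$ to be an upper set of $[0,1]$, and the right endpoint equals $1$ by the same Bellman-evaluation used in the proof of Theorem \ref{thm:structural_result_2}.B, with the stop payoff at $b(1)=1$ dominating the continuation value for each $l$.

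First I would set up value iteration and maintain two inductive invariants at every iterate $k \geq 0$ and every $l \in \{1,\ldots,L\}$: (i) $V_l^k(b(1))$ is non-increasing in $b(1)$; (ii) $\phi_l^k(b(1)) := Q_l^k(b(1),S) - Q_l^k(b(1),C)$ is non-decreasing in $b(1)$. The base case $k=0$ is immediate, since $V_l^0 \equiv 0$ and $\phi_l^0 = \mathcal{R}^S_{b(1),l} - \mathcal{R}^C_{b(1),l}$ is monotone by Lemma \ref{lemma:p_r_tp2}. Passing to the limit $k \to \infty$ transfers (ii) to $\phi_l$, which yields the claimed connectedness.

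For the inductive step from $k-1$ to $k$, the workhorse is Lemma \ref{lemma:monotone_filter}, whose three parts are activated by the TP2 hypotheses on $\mathcal{P}^{a_t}_{s_t,s_{t+1},l_t}$ and $f_{XYZ|s}$. To propagate (i): for each fixed action $a$, part 3 and invariant (i) at step $k-1$ imply that $o \mapsto V_l^{k-1}(b_a^o(1))$ is non-increasing in $o$, while part 2 says that the predictive distribution of $o$ given $b$ is stochastically increasing in $b(1)$; combining these through standard stochastic-dominance calculus shows that $\sum_o \mathbb{P}^o_{b(1)} V_l^{k-1}(b_a^o(1))$ is non-increasing in $b(1)$, so $V_l^k(b) = \max_a Q_l^k(b,a)$ inherits the property. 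To propagate (ii), expand
\begin{align*}
\phi_l^k(b) = \bigl(\mathcal{R}^S_{b,l} - \mathcal{R}^C_{b,l}\bigr) + \sum_o \mathbb{P}^o_{b(1)}\bigl[V_{l-1}^{k-1}(b_S^o(1)) - V_l^{k-1}(b_C^o(1))\bigr];
\end{align*}
the first bracket is non-decreasing by Lemma \ref{lemma:p_r_tp2}, and the second reduces to monotonicity of an expected value-function gap.

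The hard part will be the second bracket in (ii), which \emph{a priori} mixes two different posteriors $b_S^o$ and $b_C^o$. A key simplification should break the deadlock: inspecting Eqs. \ref{eq:tp_1}--\ref{eq:tp_4}, the one-step transition probabilities are identical under $a=S$ and $a=C$ whenever $l>1$, so $b_S^o(1) = b_C^o(1)$ in that regime; the sum then collapses to the expectation of $V_{l-1}^{k-1}(b^o(1)) - V_l^{k-1}(b^o(1))$, whose monotonicity in $b(1)$ follows by Lemma \ref{lemma:monotone_filter} combined with the monotonicity in $l$ of the value-function gap already proved for Theorem \ref{thm:structural_result_2}.A (non-decreasingness of $W_l^k$ in $l$). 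For the boundary case $l=1$, the stop posterior collapses to $\emptyset$ with $V_0^{*}(\emptyset)=0$ and the argument reduces to the single-stop setting covered by Lemma \ref{lemma:convex_stopping_set}. Taking $k \to \infty$ then yields that $\phi_l$ is non-decreasing, so $\mathscr{S}_l = [\alpha_l^{*},1]$ and is connected.
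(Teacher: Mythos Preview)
Your overall strategy---proving that the $Q$-value difference is non-decreasing in $b(1)$ via value-iteration induction and the TP2/filter monotonicity of Lemma~\ref{lemma:monotone_filter}---matches the paper's. However, the specific invariants and the inductive step contain two concrete gaps.

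First, invariant (i) is false: $V_l^k(b(1))$ is \emph{not} non-increasing in $b(1)$. Already at $k=1$ (with $V^0\equiv 0$) one has $V_1^1(b)=\max\!\bigl(b\cdot R_{st}/4,\; R_{sla}+b\cdot R_{int}/L\bigr)$, and for the paper's parameters this equals $\max(12.5\,b,\;1-10b/L)$, which is $1$ at $b=0$ and $12.5$ at $b=1$. The underlying reason is that the per-state rewards are not monotone in $s$ uniformly over actions ($\mathcal{R}^S_{s,l}$ increases in $s$ while $\mathcal{R}^C_{s,l}$ decreases), so the standard monotone-value machinery does not apply. Fortunately (i) is not needed for the corollary, so this is a superfluous but incorrect claim.

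Second, and more seriously, the ``hard part'' of (ii) is not justified. You need the integrand $b^o\mapsto V_{l-1}^{k-1}(b^o)-V_l^{k-1}(b^o)$ to be non-decreasing in $b^o$ in order to push monotonicity through the expectation via Lemma~\ref{lemma:monotone_filter}. You cite Theorem~\ref{thm:structural_result_2}.A, but that result gives monotonicity of $W_l^k$ in $l$, not in $b$; it says nothing about how the gap $V_{l-1}^{k-1}-V_l^{k-1}$ varies with the belief. So the argument is circular: the monotonicity in $b$ of the value-gap is precisely what you are trying to establish. The paper closes this gap differently: its inductive hypothesis is that $G_l^{k}(b):=\mathcal{R}^S_{b,l}-\mathcal{R}^C_{b,l}+V_{l-1}^{k}(b)-V_l^{k}(b)$ is weakly increasing in $b$, and the inductive step proceeds by a three-way case split on whether $b\in\mathscr{S}_l\cap\mathscr{S}_{l-1}$, $\mathscr{S}_l\cap\mathscr{C}_{l-1}$, or $\mathscr{C}_l\cap\mathscr{C}_{l-1}$ (using the nesting $\mathscr{S}_{l-1}\subseteq\mathscr{S}_l$ from Theorem~\ref{thm:structural_result_2}.A to rule out other cases). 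In each case the difference $V_{l-1}^k-V_l^k$ is expanded via the Bellman equation into $(k{-}1)$-quantities that fall directly under the induction hypothesis. Without this case analysis---or some substitute that actually yields monotonicity of the gap in $b$---your induction does not close.
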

\begin{proof}
We adapt the proof from \cite[Theorem 1.B, pp. 389-397]{optimal_multiple_stopping_social_media_1} to our model. In contrast to the referenced proof, our model includes non-zero rewards for the continue action and $|\mathcal{S}\setminus \emptyset |=2$.

If $b(1) \in \mathscr{S}_{l}$, the Bellman equation and the fact that $\mathbb{P}[o|a,b]=\mathbb{P}[o|b]=\mathbb{P}^o_{b(1)}$ for all $a \in \mathcal{A}$ and $o\neq \emptyset$ (see Eq. \ref{eq:obs_function}) implies that:
\begin{align}
\mathcal{R}^S_{b(1),l} - \mathcal{R}^C_{b(1),l} + \sum_{o \in \mathcal{O}} \mathbb{P}^o_{b(1)}\Big(V^{*}_{l-1}\big(b^o(1)\big) - V^{*}_{l}\big(b^o(1)\big)\Big) \geq 0\label{eq:proof_ind_1111}
\end{align}
\normalsize We show that the above inequality implies that $b^{\prime}(1) \in \mathscr{S}_{l}$ for any $b^{\prime}(1) \geq b(1)$, which means that $\mathscr{S}_l$ is connected.

Since $\mathcal{B}=[0,1]$, the beliefs are totally ordered according to the standard ordering. Further, since $f_{XYZ|s}$ is TP2 by assumption and $\mathcal{P}^{a_t}_{s_t,s_{t+1},l_t}$ is TP2 by Lemma \ref{lemma:p_r_tp2}, $b^o(1)$ is weakly increasing in both $b(1)$ and $o \in \mathcal{O}$. Further, $\mathbb{P}[o \geq k |b^{\prime},a] \geq \mathbb{P}[o \geq k |b,a]$ for any $k \in \mathcal{O}$ (Lemma \ref{lemma:monotone_filter}). Thus, since $\mathscr{S}_{l-1}\subseteq \mathscr{S}_{l}$ (Theorem \ref{thm:structural_result_2}.A) and $\mathscr{S}_{1}=[\alpha^{*}_{1},1]$ (Theorem \ref{thm:structural_result_2}.B), it is sufficient to show that $\mathcal{R}^S_{b(1),l} - \mathcal{R}^C_{b(1),l} + V^{*}_{l-1}\big(b(1)\big) - V^{*}_{l}\big(b(1)\big)$ is weakly increasing in $b(1)$. We proceed to show this by mathematical induction.

For iteration $k=0$ of value iteration, $\mathcal{R}^S_{b(1),l} - \mathcal{R}^C_{b(1),l} + V^{0}_{l-1}\big(b^o(1)\big) - V^{0}_{l}\big(b^o(1)\big) = \mathcal{R}^S_{b(1),l} - \mathcal{R}^C_{b(1),l}$ which is weakly increasing in $b(1)$ by Lemma \ref{lemma:p_r_tp2}. Assume by induction that the expression is weakly increasing in $b(1)$ for iterations $k-1,k-2,\hdots,1$. We show that this implies that the induction assumption holds also for iteration $k$.

Since $\mathscr{S}_{l-1}\subseteq \mathscr{S}_{l}$ (Theorem \ref{thm:structural_result_2}.A) and $\mathscr{S}_{1}=[\alpha^{*}_{1},1]$ (Theorem \ref{thm:structural_result_2}.B), there are three cases to consider
\begin{enumerate}
\item If $b(1) \in \mathscr{S}_{l} \cap \mathscr{S}_{l-1}$, then:
  \begin{align}
    &\mathcal{R}^S_{b(1),l} - \mathcal{R}^C_{b(1),l} + V^{k}_{l-1}\big(b(1)\big) - V^{k}_{l}\big(b(1)\big) = \mathcal{R}^S_{b(1),l-1} -\nonumber\\
    & \mathcal{R}^C_{b(1),l-1} + \sum_{o \in \mathcal{O}} \mathbb{P}^{o}_{b(1)}\Big(V^{k-1}_{l-2}\big(b^o(1)\big) - V^{k-1}_{l-1}\big(b^o(1)\big)\Big)
  \end{align}
which is weakly increasing in $b(1)$ by the induction assumption.
\item If $b(1) \in \mathscr{S}_{l} \cap \mathscr{C}_{l-1}$, then:
  \begin{align}
    &\mathcal{R}^S_{b(1),l} - \mathcal{R}^C_{b(1),l} + V^{k}_{l-1}\big(b(1)\big) - V^{k}_{l}\big(b(1)\big) = \\
    &\sum_{o \in \mathcal{O}} \mathbb{P}^{o}_{b(1)}\Big(V^{k-1}_{l-1}\big(b^o(1)\big) - V^{k-1}_{l-1}\big(b^o(1)\big)\Big)=0\nonumber
  \end{align}
which is trivially weakly increasing in $b(1)$.
\item If $b(1) \in \mathscr{C}_{l} \cap \mathscr{C}_{l-1}$, then:
  \begin{align}
    &\mathcal{R}^S_{b(1),l} - \mathcal{R}^C_{b(1),l} + V^{k}_{l-1}\big(b(1)\big) - V^{k}_{l}\big(b(1)\big) = \mathcal{R}^S_{b(1),l} -\nonumber\\
    &\mathcal{R}^C_{b(1),l} + \sum_{o \in \mathcal{O}} \mathbb{P}^{o}_{b(1)}\Big(V^{k-1}_{l-1}\big(b^o(1)\big) - V^{k-1}_{l}\big(b^o(1)\big)\Big)
  \end{align}
which is weakly increasing in $b(1)$ by the induction assumption.
\end{enumerate}
\end{proof}

\begin{proof}[Proof of Theorem \ref{thm:structural_result_2}.C.]
Since $\mathcal{B} = [0,1]$ (see Section \ref{sec:dp_opt}), $f_{XYZ|s}$ is TP2 by assumption, $\mathcal{P}^{a_t}_{s_t,s_{t+1},l_t}$ is TP2 by Lemma \ref{lemma:p_r_tp2}, and $\mathcal{R}^{S}_{b(1),l}-\mathcal{R}^{C}_{b(1),l}$ is increasing in $b(1)$ (Lemma \ref{lemma:p_r_tp2}), it follows from Corollary \ref{corollary:connected_stopping_set} that $\mathscr{S}_{l}$ is a connected subset of $[0,1]$ for $l\in \{1,\hdots, L\}$. Further, from Theorem \ref{thm:structural_result_2}.B we know that $\mathscr{S}_{1} = [\alpha_{1}^{*}, 1]$. Then, because $\mathscr{S}_{l} \subseteq \mathscr{S}_{l+1}$ for $l\in \{1, \hdots, L-1\}$ (Theorem \ref{thm:structural_result_2}.A), we conclude that $\mathscr{S}_{l} = [\alpha^{*}_{l}, 1]$ and that $\alpha^{*}_l \geq \alpha^{*}_{l+1}$ for $l\in \{1, \hdots, L-1\}$.
\end{proof}
\section{Hyperparameters}\label{appendix:hyperparameters}
\begin{table}
\centering
\resizebox{1\columnwidth}{!}{%
\begin{tabular}{ll} \toprule
  {\textit{Hyperparameters for the POMDP}} & {\textit{Values}} \\
  \hline
  $\gamma$, $\Delta x_{max}, \Delta y_{max},\Delta z_{max}$ & $1$, $6\cdot 10^{2}$, $3\cdot 10^{2}$, $10^{2}$\\
  \\
  {\textit{Hyperparameters for \textsc{T-SPSA}}} & {\textit{Values}} \\
  \hline
  $c, \gamma, \epsilon, A, a$ & $1$, $0.101$, $0.602$, $100$, $1$\\
  \\
  {\textit{Hyperparameters for PPO}} & {\textit{Values}} \\
  \hline
  lr $\alpha$, batch, \# layers, \# neurons, clip $\epsilon$ & $10^{-4}$, $4\cdot 10^{3}t$, $2$, $32$, $0.2$\\
  GAE $\lambda$, ent-coef, activation & $0.95$, $10^{-4}$, ReLU \\
  \\
  {\textit{Hyperparameters for HSVI}} & {\textit{Values}} \\
  \hline
  $\epsilon$ & $0.01$\\
  \\
  {\textit{Hyperparameters for Shiryaev's algorithm}} & {\textit{Values}} \\
  \hline
  $\alpha$ & $0.75$\\
  \bottomrule\\
\end{tabular}
}
\caption{Hyperparameters of the POMDP and the algorithms used for evaluation.}\label{tab:hyperparams}
\end{table}
The hyperparameters used for the evaluation are listed in Table \ref{tab:hyperparams} and were obtained through grid search.
\section{Configuration of the Infrastructure in Fig. \ref{fig:system2}}\label{appendix:infrastructure_configuration}
The configuration of the target infrastructure (Fig. \ref{fig:system2}) is available in Table \ref{tab:emulation_setup}.
\begin{table}
\centering
\resizebox{1\columnwidth}{!}{%
\begin{tabular}{ll} \toprule
  {\textit{ID (s)}} & {\textit{OS:Services:Exploitable Vulnerabilities}} \\ \midrule
  $N_1$ & Ubuntu20:Snort(community ruleset v2.9.17.1),SSH:- \\
  $N_2$ & Ubuntu20:SSH,HTTP Erl-Pengine,DNS:SSH-pw\\
  $N_4$ & Ubuntu20:HTTP Flask,Telnet,SSH:Telnet-pw \\
  $N_{10}$ &Ubuntu20:FTP,MongoDB,SMTP,Tomcat,TS3,SSH:FTP-pw \\
  $N_{12}$ & Jessie:TS3,Tomcat,SSH:CVE-2010-0426,SSH-pw \\
  $N_{17}$ & Wheezy:Apache2,SNMP,SSH:CVE-2014-6271 \\
  $N_{18}$ & Deb9.2:IRC,Apache2,SSH:SQL Injection \\
  $N_{22}$ & Jessie:PROFTPD,SSH,Apache2,SNMP:CVE-2015-3306 \\
  $N_{23}$ & Jessie:Apache2,SMTP,SSH:CVE-2016-10033 \\
  $N_{24}$ & Jessie:SSH:CVE-2015-5602,SSH-pw \\
  $N_{25}$ & Jessie: Elasticsearch,Apache2,SSH,SNMP:CVE-2015-1427\\
  $N_{27}$ & Jessie:Samba,NTP,SSH:CVE-2017-7494\\
  $N_3$,$N_{11}$,$N_{5}$-$N_9$& Ubuntu20:SSH,SNMP,PostgreSQL,NTP:-\\
  $N_{13-16}$,$N_{19-21}$,$N_{26}$,$N_{28-31}$& Ubuntu20:NTP, IRC, SNMP, SSH, PostgreSQL:-\\
  \bottomrule\\
\end{tabular}
}
\caption{Configuration of the target infrastructure (Fig. \ref{fig:system2}).}\label{tab:emulation_setup}
\end{table}

\section{The \textsc{T-SPSA} Algorithm}\label{appendix:spsa_thresholds}
Algorithm \ref{alg:spsa_thresholds} contains the pseudocode of \textsc{T-SPSA}.
\begin{algorithm}
  \caption{\textsc{T-SPSA}}\label{alg:spsa_thresholds}
  \hspace*{\algorithmicindent} \textbf{Input} \\
  \hspace*{\algorithmicindent}  $\mathcal{M}_{\mathcal{P}}, \theta_{(1)} \in \mathbb{R}^{L}$: the POMDP, initial $L$ thresholds\\
  \hspace*{\algorithmicindent}  $N$: number of iterations\\
  \hspace*{\algorithmicindent}  $a,c,\lambda,A,\epsilon$: scalar coefficients\\
  \hspace*{\algorithmicindent} \textbf{Output} \\
  \hspace*{\algorithmicindent} $\theta_{(N+1)}$: learned threshold vector
\begin{algorithmic}[1]
  \Procedure{T-SPSA}{$\mathcal{M}_{\mathcal{P}}$, $\theta_{(1)}$, $N$, $a$, $c$, $\lambda$, $A$, $\epsilon$}
  \For{$n \in \{1, \hdots, N\}$}
  \State $a_n \leftarrow \frac{a}{(n + A)^{\epsilon}}, c_n \leftarrow \frac{c}{n^{\lambda}}$
  \For{$i \in \{1, \hdots, L\}$}
  \State $(\Delta_n)_i \sim \mathcal{U}(\{-1,1\})$
  \EndFor
  \State $R_{high} \sim \hat{J}(\theta_{(n)} + c_n\Delta_n)$, $R_{low} \sim \hat{J}(\theta_{(n)} - c_n\Delta_n)$
  \For{$i \in \{1, \hdots, L\}$}
  \State $\left(\hat{\nabla}_{\theta_{(n)}}J(\theta_{(n)})\right)_{i} \leftarrow \frac{R_{high} - R_{low}}{2c_n(\Delta_n)_{i}}$
  \EndFor
  \State $\theta_{(n+1)} \leftarrow \theta_{(n)} + a_n\hat{\nabla}_{\theta_{(n)}}J(\theta_{(n)})$
  \EndFor
  \State \Return $\theta_{(N+1)}$
\EndProcedure
\end{algorithmic}
\end{algorithm}

\ifCLASSOPTIONcaptionsoff
  \newpage
\fi

\bibliographystyle{IEEEtran}
\bibliography{references,url}

\end{document}

